
\typeout{On the Utility of Prediction Sets Prediction in Human-AI Teams}


\documentclass{article}
\pdfpagewidth=8.5in
\pdfpageheight=11in
\usepackage{ijcai22}
\usepackage{booktabs}
\usepackage{amssymb}
\usepackage{spverbatim}
\usepackage{fancyvrb}
\usepackage{lipsum}
\usepackage{multirow}
\usepackage{graphicx}
\usepackage{times}
\usepackage{soul}
\usepackage{url}
\usepackage[hidelinks]{hyperref}
\usepackage[utf8]{inputenc}
\usepackage[small]{caption}
\usepackage{graphicx}
\usepackage{amsmath}
\usepackage{amsthm}
\usepackage{xspace}
\usepackage{booktabs}
\usepackage{algorithm,algpseudocode}
\urlstyle{same}
\usepackage[table,xcdraw]{xcolor}
\makeatletter
\usepackage{latexsym}
\usepackage{hyperref}
\usepackage{microtype}
\usepackage{booktabs}
\usepackage{url}

\makeatother


\newtheorem{theorem}{Theorem}




\pdfinfo{
/TemplateVersion (IJCAI.2022.0)
}

\title{On the Utility of Prediction Sets in Human-AI Teams}


\author{
Varun Babbar$^1$
\and
Umang Bhatt $^{1,2}$\and
Adrian Weller$^{1,2}$
\affiliations
$^1$University of Cambridge, $^2$The Alan Turing Institute\\
\emails
\{vb395, usb20, aw665\}@cam.ac.uk
}
\begin{document}

\maketitle

\begin{abstract}
Research on human-AI teams usually provides experts with a single label, which ignores the uncertainty in a model's recommendation. Conformal prediction (CP) is a well established line of research that focuses on building a theoretically grounded, calibrated prediction set, which may contain multiple labels. We explore how such prediction sets impact expert decision-making in human-AI teams.  Our evaluation on human subjects finds that set valued predictions positively impact experts. However, we notice that the predictive sets provided by CP can be very large, which leads to unhelpful AI assistants. To mitigate this, we introduce D-CP, a method to perform CP on some examples and defer to experts. We prove that D-CP can reduce the prediction set size of non-deferred examples. We show how D-CP performs in quantitative and in human subject experiments ($n=120$). Our results suggest that CP prediction sets improve human-AI team performance over showing the top-1 prediction alone, and that experts find D-CP prediction sets are more useful than CP prediction sets.
\end{abstract}

\section{Introduction}
Human-AI collaboration is of increasing importance.
Several works have shown the benefits of human-AI collaboration in boosting accuracy, fairness, and compatibility~\cite{madras2018predict,bansal2019updates,mozannar2020consistent}. One form of collaboration in the medical domain is the effect of AI explanations on team performance~\cite{Lundberg2018}, where team performance improves if model explanations are provided. Another form of human-AI collaboration develops techniques for models to defer to an expert. Prior literature exploring both these forms of collaboration has mainly considered models which output singular predictions. However, this does not allow experts to gauge and interpret the predictive uncertainty of the model, which can prevent deployment in high risk settings. A solution to this is for the model to display \textit{set valued predictions}. We define a set valued model prediction $\Gamma$ as a mapping from the input space $\mathcal{X}$ to the power set of the label space $\mathcal{Y}$, i.e. $\Gamma: \mathcal{X} \rightarrow 2^{\mathcal{Y}}$. One way to construct a set valued predictor is through a technique called Conformal Prediction (CP) \cite{vovk2005}. CP generates a prediction set that may contain multiple labels, but contains the true label with a user defined error probability. The goal of CP is to construct predictive sets that are sufficiently small but have high probability of containing the true label.

One problem often associated with CP sets is that they can be quite large, which can limit their usefulness in time and cost sensitive domains such as medical diagnostics, where it is crucial to narrow down the list of possible diagnoses. Previous work such as \cite{Bellotti} and \cite{stutz2022learning} have devised surrogate loss functions for minimizing set sizes whilst maintaining coverage guarantees. \cite{Angelopoulos2020} regularize the low scores of unlikely classes to provide small, stable sets. However, CP literature in general has given little consideration given to a) how useful such predictive sets are in human-AI teams and b) how human expertise could be leveraged to get smaller predictive sets~\cite{Sadinle2016,Romano2020,Angelopoulos_Intro_2021}. Recently, \cite{okati_cp} explored improving expert predictions using CP, with a focus on finding the optimal error tolerance parameter $\alpha$ that benefits the expert. In this concurrent work, we fix $\alpha$ and use deferral as a mechanism to provide sets that are smaller and hence more useful to an expert. 
\begin{figure}[t]
    \centering
    \includegraphics[scale=0.22]{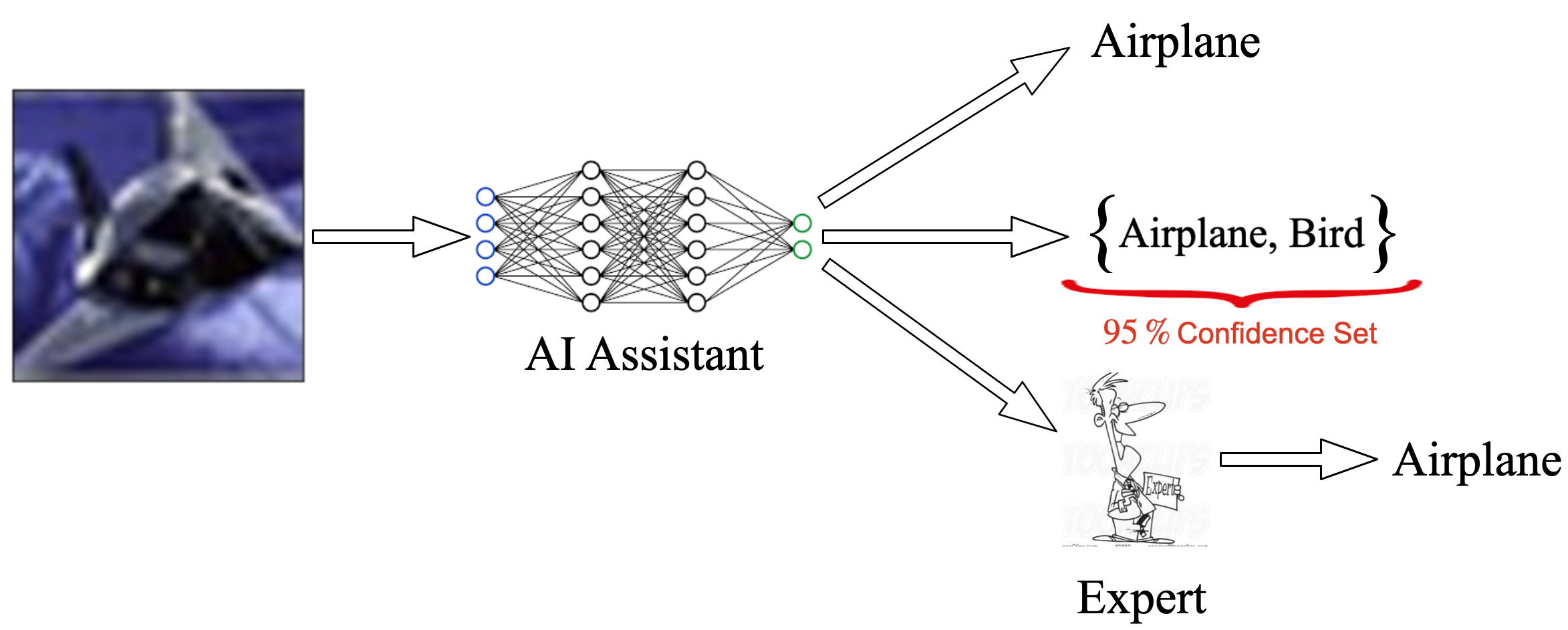}
    \caption{An AI assistant working alongside an expert can output one of three things: the most likely label, a set valued prediction with a predetermined error probability, or a deferral token indicating that the example should be labelled by the expert. The precise nature of the AI's prediction should be dependent on acquired knowledge of the expert's capabilities. Generally speaking, because the size of the predictive set is a reflection of the model's confidence, deferring examples on which an expert is more confident than the model would prevent an expert from using large, incoherent prediction sets.}
    \label{fig:example}
\end{figure}
Our contributions are the following:
\begin{itemize}
   \item Through human subject experiments on CIFAR-100, we first show that CP sets result in higher levels of reported trust and utility as compared to Top-1 predictions. 
   
   \item We empirically demonstrate one limiting aspect of CP sets: they can be very large for some examples. In order to mitigate this, we propose combining set valued classification and learning to defer. Through a toy example, we show the utility of learning to defer for the predictive set size of non-deferred examples. 

   \item We then introduce D-CP, a general practical scheme that learns to defer on some examples and perform conformal prediction on others. We prove that, under mild conditions, the set size associated with non-deferred examples will be smaller than that of the original dataset. 
   
   \item Through further human subject experiments on the CIFAR-100 dataset, we discover two benefits of D-CP: smaller predictive sets and improved team performance. We find that D-CP leads to higher reported utility and expert accuracy compared to showing only CP sets.
\end{itemize}

\section{Related Work}
\subsection{Conformal Prediction}
There is growing interest in conformal prediction \cite{vovk2005} as a method of rigorous uncertainty quantification. Given a test example $X_{test}$ and its (hidden) true label $Y_{test}$, this method allows the user to construct sets $\Gamma(X_{test})$ that control for the binary risk, i.e. the error probability $\alpha = P(Y_{test} \notin \Gamma(X_{test}))$. This is done by performing a statistical test for each label in order to decide whether the label should be present in the set. In particular, we define a \textit{conformity score} $\tau(X_{test},y): \mathcal{X} \times \mathcal{Y} \rightarrow \mathcal{R}$ that determines how different example $(X_{test},y)$ is from already observed data $\{(X_i,Y_i)\}_{i=1}^n$. This is a design choice and several papers explore different choices of conformity functions~\cite{Sadinle2016,Angelopoulos2020,Romano2020}. To include label $y$ in a predictive set, we require that the conformity score $\tau(X_{test},y)$ is at least $\alpha$-common with respect to conformity scores on previously observed data, i.e. $\textrm{Quantile}(\tau(X_{test},y),\{\tau(X_i,Y_i)\}) \geq \alpha$.

This is equivalent to learning a threshold conformity score for including labels in a set. Defining the threshold as $\tau_{cal} = \textrm{Quantile}(\alpha,\{\tau(X_i,Y_i)\})$, we require $\tau(X_{test},y) \geq \tau_{cal}$ to include the label $y$ in the predictive set for $X_{test}$. The conformal set is therefore defined as: $\Gamma(X_{test},\tau_{cal}) = \{y: \tau(X_{test},y) \geq \tau_{cal}\}$. In this paper, we employ a computationally efficient scheme called Inductive Conformal Prediction (ICP) \cite{Papadopoulos08}. This requires an additional \textbf{calibration dataset} $\mathcal{D}_{cal} =\{(X_i,Y_i)\}_{i=1}^n$ drawn from the same distribution as training and validation sets. After training a classifier on a training dataset, we can use this calibration dataset to choose the $\alpha$ Quantile threshold $\tau_{cal}$.

None of the works that have previously studied CP ~\cite{Sadinle2016,Angelopoulos2020,Romano2020,stutz2022learning} involved experts in the loop or even considered the utility of prediction sets generated in the context of human-AI teams.
\subsection{Learning to Defer}
Many works have studied the idea of learning a model that adapts to an underlying expert. One approach is to learn a rejector and a classifier, wherein, given a cost of deferring $c$, one learns a binary classifier that rejects whenever it is less than $1-c$ confident~\cite{Bartlett2008,Corinna2016}. For multi-class problems, \cite{mozannar2020consistent} learn a model that predicts the true label whenever the expert is wrong and defers otherwise. Similarly, \cite{okati2021differentiable} develop a method for exact triage under multiple expert annotations and prove its optimality under conditions where there is expert disagreement. \cite{Wilder2020}, on the other hand, develop a decision theoretic approach, training $3$ probabilistic models representing the AI, expert, and joint human-AI to maximize utility. However, all these approaches only examine settings where the AI makes point predictions whereas we aim to defer some examples and provide principled, calibrated prediction sets on others.
\section{Are Prediction Sets better for Human-AI teams than Top-1 Predictions?}
Our first study focuses on establishing the value of set valued predictions. For our experiments, we focus on one particular CP scheme called Regularised Adaptive Prediction Sets (RAPS) \cite{Angelopoulos2020}. We recruit $30$ participants on Prolific, paying them at a rate of £$10$ per hour prorated, and divide them into $2$ equal groups. The first group is shown $18$ images from the CIFAR-100 dataset alongside the model's most probable prediction (Top-1). The second group is shown the same images but alongside a RAPS prediction set with error rate $\alpha = 0.1$. To understand the effect of set valued predictions on examples of varying difficulty, we divide the CIFAR-100 test dataset into $3$ difficulty quantiles, where difficulty is defined as the entropy of the model predictive distribution. We select $6$ images from each difficulty quantile. For each quantile, we show $2$ images whose Top-1 prediction is incorrect but whose RAPS set contains the true label. This is consistent with the accuracy of the model ($\approx 65\%$) and lets us determine the effect of set valued predictions on examples on which the model is \textit{almost} correct.
Given these model predictions for each image, we ask participants in both groups to predict the correct class, rate their confidence in their predictions, and rate how useful they found the model predictions on that example.
At the end of the survey, we ask participants to rate their overall trust in the model's predictions. All ratings are on a scale from $1-10$. We employ preliminary attention checks by first asking them to classify $3$ easy examples, rejecting any participants who classify these examples incorrectly. We evaluate the statistical significance of our results using a two sample t-test.

\begin{table}[htb]
\centering
\resizebox{8.5cm}{!}{%
\begin{tabular}{ccccc}
\hline
\textbf{Metric}                                                   & \textbf{Top-1}    & \textbf{RAPS}     &  \textbf{$p$ value}   &    \textbf{Effect Size} \\ \hline
\textbf{Accuracy} & 0.76 $\pm$ 0.05 & 0.76 $\pm$ 0.05 & 0.999 & 0.000             \\ 
\textbf{Reported Utility}                                                  & 5.43 $\pm$ 0.69 & 6.94 $\pm$ 0.69 & \textbf{0.003} & 1.160         \\ 
\textbf{Reported Confidence}                                               & 7.21 $\pm$ 0.55 & 7.88 $\pm$ 0.29 & 0.082 & 0.674        \\ 
\textbf{\begin{tabular}[c]{@{}c@{}}Reported Trust in Model \end{tabular}}                                                    & 5.87 $\pm$ 0.81 & 8.00 $\pm$ 0.69 & \textbf{$<$ 0.001} & 1.487           \\ \hline
\end{tabular}
}
\caption{Top-1 vs RAPS: All Examples}
\label{tab:top_1_vs_raps_all}
\end{table}
\begin{table}[htb]
\centering
\resizebox{8.25cm}{!}{%
\begin{tabular}{ccccc}
\hline
\textbf{Metric}                                                   & \textbf{Top-1}    & \textbf{RAPS}     & \textbf{$p$ value} & \textbf{Effect Size} \\\hline
\textbf{Accuracy} & 0.90 $\pm$ 0.05 & 0.87 $\pm$ 0.07 & 0.486 & 0.273             \\ 
\textbf{Reported Utility}                                                  & 6.067 $\pm$ 0.94 & 6.35 $\pm$ 1.00 & 0.438 & 0.195            \\ 
\textbf{Reported Confidence}                                               & 7.88 $\pm$ 0.65 & 8.82 $\pm$ 0.31 & \textbf{0.013} & 1.019            \\ \hline
\end{tabular}
}
\caption{Top-1 vs RAPS: Lowest Difficulty Quantile}
\label{tab:top_1_vs_raps_lowest}
\end{table}
\begin{table}[htb]
\centering
\resizebox{8.5cm}{!}{%
\begin{tabular}{ccccc}
\hline
\textbf{Metric}                                                   & \textbf{Top-1}    & \textbf{RAPS}     & \textbf{$p$ value} & \textbf{Effect Size}\\ \hline
\textbf{Accuracy} & 0.64 $\pm$ 0.07 & 0.66 $\pm$ 0.10 & 0.828 & 0.068            \\
\textbf{Reported Utility}                                                  & 5.30 $\pm$ 0.75 & 7.28 $\pm$ 0.69 & \textbf{0.001} & 1.432            \\ 
\textbf{Reported Confidence}                                               & 6.64 $\pm$ 0.64 & 6.96 $\pm$ 0.78 & 0.4888 & 0.280            \\ \hline
\end{tabular}
}
\caption{Top-1 vs RAPS: Highest Difficulty Quantile}
\label{tab:top_1_vs_raps_highest}
\end{table}
From Table \ref{tab:top_1_vs_raps_all}, we see that Top-1 predictions result in statistically significant lower levels of trust ($p < 0.001$) and perceived utility ($p = 0.003$) compared to RAPS. However, both schemes result in similar accuracy and confidence in predictions. We also see that users find Top-1 and RAPS predictions equally useful for easy examples (Table \ref{tab:top_1_vs_raps_lowest}). This makes sense because in such cases, the predictive set will be small and therefore comparable to a Top-1 prediction. However, users are more confident about their answers when they observe RAPS predictions. On the other hand, RAPS sets are perceived to be much more useful on hard examples, where Top-1 predictions will often be wrong. \\
\noindent
\textbf{Takeaway:} \textit{While there is seen to be no significant difference in team accuracy when shown either Top-1 or set-valued predictions, displaying set-valued predictions in human-AI teams results in higher reported utility of predictions as well as higher reported overall trust in the model.}

\section{Proposed Approach}
\subsection{The Problem with CP Sets}
In our experiments above, we showed users examples where the set sizes on CIFAR-100 are small enough to be considered useful. However, this may not always be the case, especially on tasks with large label spaces. For instance, a standard WideResNet model trained on CIFAR-100 ($\approx 65\%$ accuracy) with APS conformal prediction yields prediction sets with greater than $15$ labels for $20\%$ of examples. One option to mitigate this issue is to defer examples with the largest CP set sizes to an expert. However, this provides no guarantee that the expert will be able to classify them with sufficient accuracy. Furthermore, we also lose the finite sample coverage guarantees provided by contemporary CP methods, i.e. we cannot ascertain that $P(Y_{test} \notin \Gamma(X_{test})) \leq \alpha$.

\subsection{Our Scheme}
Our scheme, described in Algorithm~\ref{alg:algorithm}, is centered around two components: a deferral policy $\pi(x): \mathcal{X} \rightarrow \{0,1\}$ and a CP method. The deferral policy is based on our knowledge of the expert's strengths either acquired during training or a-priori. For example, if an expert is better at identifying brain tumors than our model, our policy should learn to defer those examples with high probability. Using this black box policy, we first prune our calibration dataset, removing all examples where our deferral policy recommends deferral. One could use any scheme in ~\cite{mozannar2020consistent,okati2021differentiable,Wilder2020} to learn a deferral policy. While Algorithm~\ref{alg:algorithm} specifies a deferral policy as an input, for some deferral methods (such as \cite{mozannar2020consistent}), the policy is trained alongside the model. In others, such as \cite{okati2021differentiable}, the policy is applied post-hoc. In this paper, we consider the former deferral policy: the D-CP algorithm for this is outlined in Algorithm $2$ in the Supplementary Material.
After training a model and a suitable deferral policy, we perform conformal calibration on this pruned dataset of non-deferred examples. In this procedure, for any predictive set $\Gamma(X_{test},\tau_{cal})$ for an example $X_{test}$ we can guarantee that:
\begin{equation}
    \label{eqn:coverage_guarantee}
    1-\alpha \leq P(Y \in \Gamma(X_{test},\tau_{cal})|\pi(X_{test}) = 0)
\end{equation}
where $1$ represents the action of deferral. From \cite{Angelopoulos2020}, when the conformity scores are known to be almost surely distinct and continuous, we can also guarantee:
\begin{equation}
    \label{eqn:coverage_guarantee_distinct}
    \text{\footnotesize{$P(Y \in \Gamma(X_{test},\tau_{cal})|\pi(X_{test}) = 0) \leq 1-\alpha + \frac{1}{n+1}$}}
\end{equation}
where $n$ is the size of the non-deferred calibration dataset. Because the deferral policy $\pi$ probabilistically decides which unseen examples to defer, all non-deferred examples can be thought of as being generated from a data generating distribution $X \sim p(X|\pi(X) = 0)$. Any new test example $X_{test}$ that is not deferred is therefore independently drawn from this distribution. Thus, $\{X_i\}_{i=1}^{n}\cup \{X_{test}\} \sim p(X_1,..X_{test}|\pi(X_1),..\pi(X_{test})  = 0)$ are exchangeable, thereby satisfying the coverage guarantee in Equation \ref{eqn:coverage_guarantee}.
\begin{figure}[htb]
    \centering
    \includegraphics[scale=0.36]{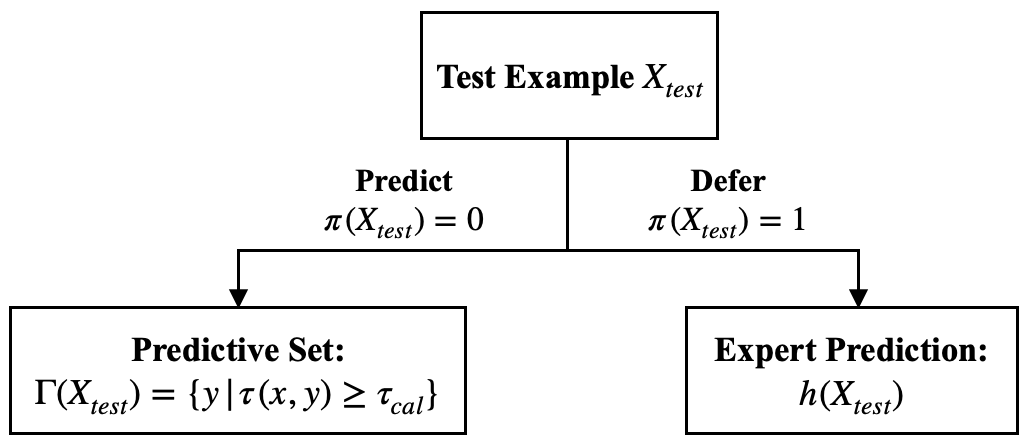}
    \caption{D-CP: Test Phase given a deferral policy $\pi(X)$}
    \label{fig:D-CP_Pipeline_2}
    \vspace{-0.2cm}
\end{figure}
\begin{figure}[htb]
    \centering
    \includegraphics[scale=0.36]{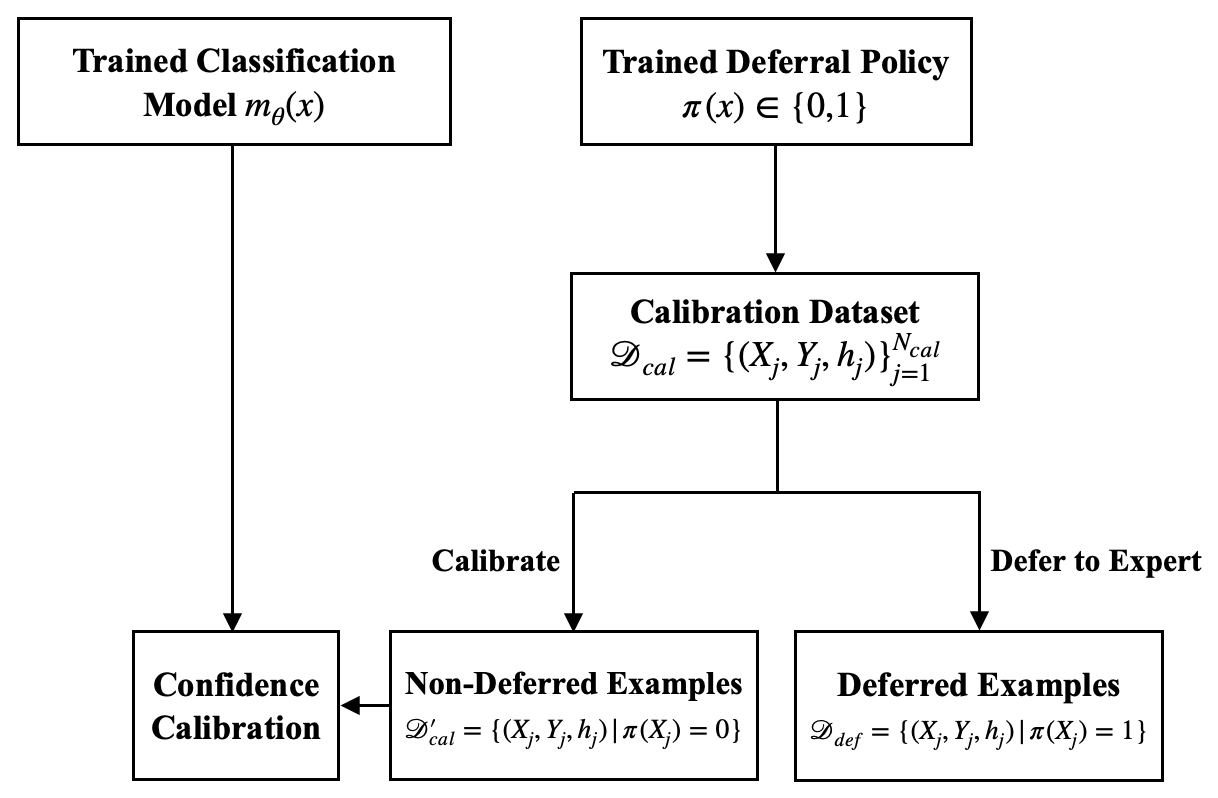}
    \caption{D-CP: Training and Calibration Phase}
    \label{fig:D-CP_Pipeline}
\end{figure}

\begin{algorithm}[htb]
\caption{General D-CP}
\label{alg:algorithm}
\textbf{Input}: Classifier $m_\theta(x) \in \mathcal{R}^{|\mathcal{Y}|}$, Deferral Policy $\pi(x) \in \{0,1\}$, Training Set $\mathcal{D}$, Expert $h(x) \in \mathcal{Y}$, Calibration Set $\mathcal{D}_{\textrm{cal}}$, Validation Set $\mathcal{D}_{\textrm{val}}$, Test Example $x_{test}$, Conformity Score Function $\tau(X,y)$, Loss function $l(m_\theta(x),y,h(x))$\\
\textbf{Parameter}:  Number of Epochs $N$, Learning Rate $\gamma$, Error Tolerance $\alpha$
\begin{algorithmic}[1] 
\For {$i \in \{1,...N\}$}
\For {Batch $\mathcal{B} \in \mathcal{D}$}
    \State $\theta = \theta - \gamma\mathbb{E}_{(x,y) \in \mathcal{B}}[\nabla_{\theta} l(m_\theta(x),y,h(x))]$
\EndFor
\EndFor
\State $\mathcal{D}^\prime_{\textrm{cal}} = \{(X,Y)| \pi(X) = 0, (X,Y) \in \mathcal{D}_{\textrm{cal}}\}$
\State $\tau_{cal} = \textrm{Quantile}(\alpha, \{\tau(X_i,Y_i)|(X_i,Y_i) \in \mathcal{D}^\prime_{\textrm{cal}}\})$ 
\If {$\pi(X_{test}) = 0$}
\State \textbf{return} $\Gamma(X_{test},\tau_{cal}) = \{y|\tau(X_{test},y) \geq \tau_{cal}\}$
\ElsIf {$\pi(X_{test}) = 1$}
\State \textbf{Defer to Expert} $h(X_{test})$
\EndIf
\end{algorithmic}
\end{algorithm}
To show the utility of our scheme, a good deferral policy would guarantee that resulting predictive sets on non-deferred examples will contain fewer incorrect labels than before. We prove this formally in Theorem $1$ in the Supplementary Material.
While this technically applies to conformity scores $\tau(x,y)$ that are monotonic with respect to softmax probabilities (such as LAC), our subsequent experiments with other CP methods such as RAPS and APS suggest that our scheme generalises well across other classes of conformity score functions.

\section{Toy Example}
One way to combine conformal prediction and deferral is to only perform CP on ``easy'' examples and defer the ``hard'' examples. An ``easy'' example would be one which the model is confident on and a ``hard'' example is the converse. This can lead to smaller sets. To demonstrate the intuition, we generate equiprobable synthetic data using a Mixture of Gaussians (MoG) model. Each datapoint is generated from one of $4$ Gaussians - $\mathcal{N}(1,1)$, $\mathcal{N}(1,-1)$, $\mathcal{N}(-1,1)$, and $\mathcal{N}(-1,-1)$ - and we wish to infer class memberships. 
We first train a multilayer perceptron (MLP) on $1000$ training samples (not shown) to infer the decision boundaries. Then, using a held out calibration set, we perform CP with error tolerance $\alpha = 5\%$ using the Least Ambiguous Classifiers (LAC) method~\cite{Sadinle2016}. In this method, we use the model softmax probabilities $p(y|x) = \tau(x,y)$ as conformity scores. Figure \ref{fig:bar_plots} (top) shows a 1-D scatter plot of conformity scores assigned to ground truth labels in the toy dataset. 

Figure \ref{fig:toy_example_part_1} shows the resulting test datapoints colored according to their true classes with model decision boundaries overlaid. We see that points closer to the decision boundary have larger predictive set sizes, reflecting their inherent uncertainty. If we defer points with conformity scores in the bottom $15^{th}$ percentile (naive decision policy) as in Figure \ref{fig:bar_plots}, the $\alpha$ threshold conformity score will increase. From Figure \ref{fig:bar_plots} (Right), for non-deferred examples, this increases the threshold for including labels in the set, resulting in more confident sets for the same error control. However, this naive deferral method, whilst ensuring small set sizes on the remaining examples, does not take into account the expertise of the expert involved. Furthermore, we assumed access to ground truth labels for test examples, which is not practical. We can engage the expert in a better manner and approximate the idea of the toy example by learning a \textit{deferral policy} which incorporates estimates of expert ability as well as machine difficulty.  This scheme makes an implicit assumption that the expert is a) either better than the model on average or b) not necessarily better than the model on average, but is proficient in classifying certain subgroups of examples. In these situations, our deferral policy is more likely to defer examples that a model is less confident on. Given these assumptions about an expert, Theorem $1$ ensures lower predictive set sizes. 

\begin{figure}[htb]
    \centering
    \includegraphics[scale=0.25]{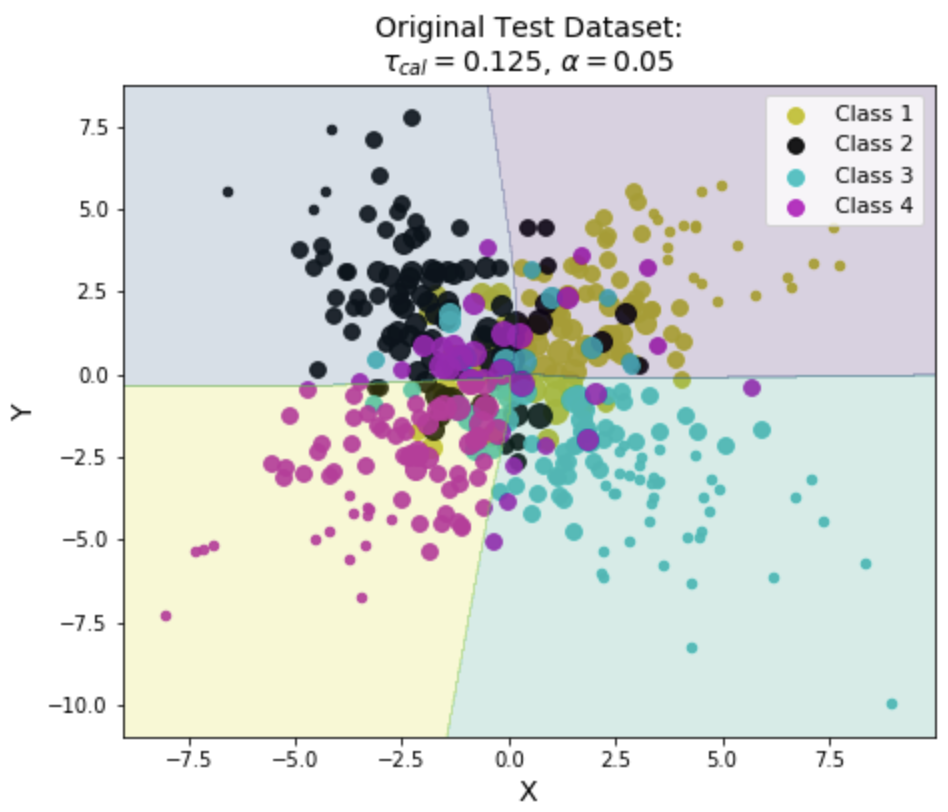}
    \includegraphics[scale=0.25]{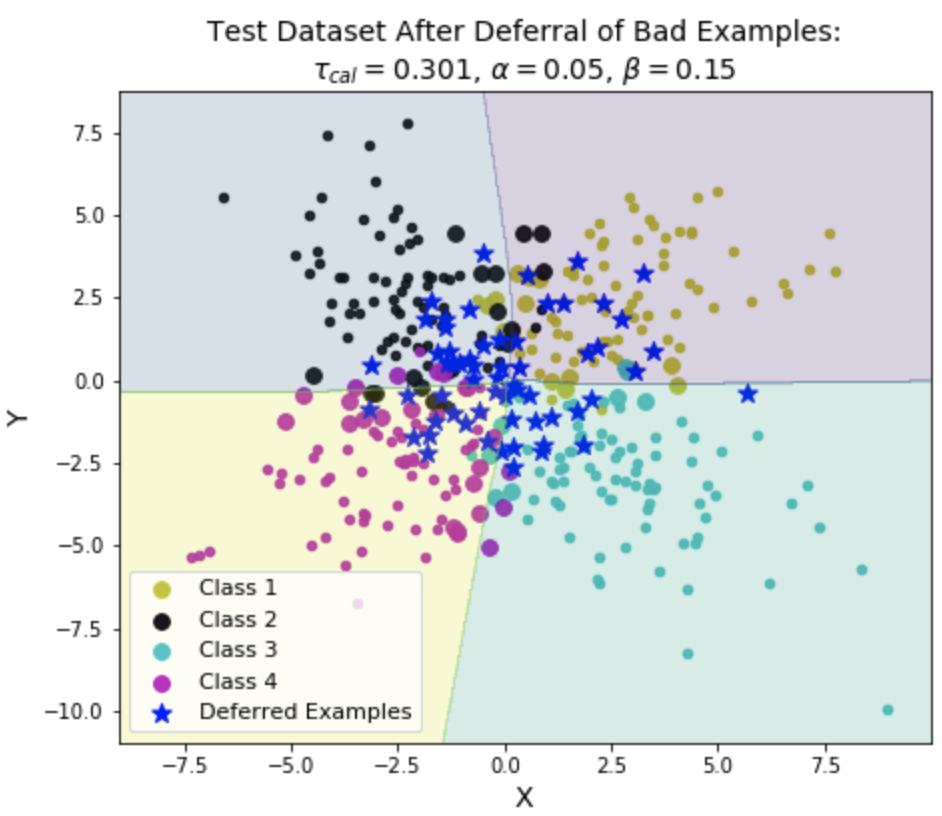}
    \caption{(Left): Toy dataset from Figure 1 comprising of datapoints belonging to one of $4$ classes along with overlaid model decision boundaries. \textbf{The size of the datapoints indicates their predictive set sizes}. (Right) We defer the $\beta = 0.15$ proportion of examples with the lowest ground truth conformity scores. Doing so increases the value of the $5^{th}$ percentile conformity score of the remaining examples in Figure 1, causing CP set sizes of examples to be smaller. Note that we have not changed the model in this process.}
    \label{fig:toy_example_part_1}
\end{figure}
\begin{figure}[htb]
    \centering
    \includegraphics[scale=0.31]{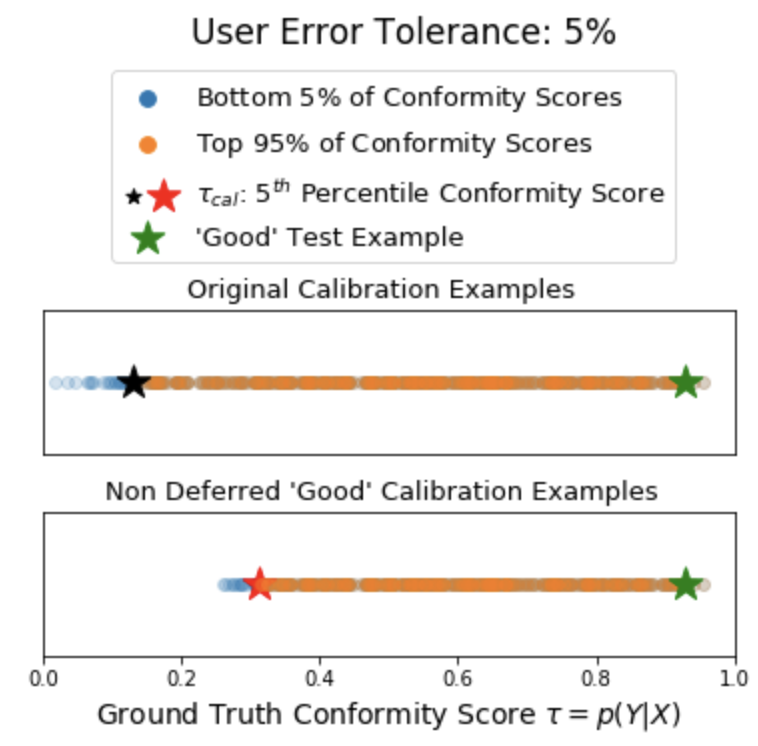}\vspace{0.5cm}
    \includegraphics[scale=0.31]{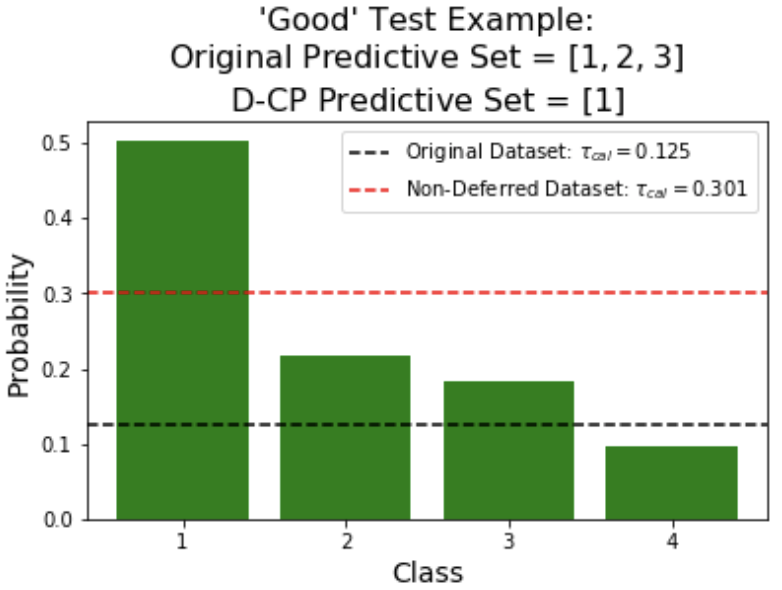}
        \caption{(Left): 1-D scatter plot of all ground truth conformity scores $\tau = p(Y_i|X_i)$ on a toy calibration dataset in Figure 2. We assume an oracle deferral policy that defers $\beta = 15\%$ of examples with the lowest $\tau$. Both values of $\tau_{cal}$ maintain $95\%$ coverage on their respective datasets. (Right): Class probabilities for the test \textbf{green starred example}. For the predictive set, we include all scores which are greater than the threshold $\tau_{cal}$. Thus, the predictive set $\{1,2,3\}$ gives $95\%$ coverage for the original dataset. On the non-deferred dataset, the set $\{1\}$ gives $95\%$ coverage.}
    \label{fig:bar_plots}
\end{figure}\vspace{-3mm}


\section{Experiments with D-CP}
To validate our approach, we perform experiments with synthetic expert labels on the CIFAR-100 dataset and real expert labels on the CIFAR-10H \cite{cifar10h} dataset\footnote{Our code is hosted at \url{https://github.com/cambridge-mlg/d-cp}.}. Because the CIFAR-10H dataset contains expert labels only on the CIFAR-10H validation set, we employ the approach in \cite{mozannar2020consistent} and train a binary classifier to predict examples where the expert is correct. We then provide synthetic expert labels $\mathcal{I}_{h(x)= y}$ or $\mathcal{I}_{h(x) \neq y}$ for examples the training set according to whether the expert errs on them. Note that, in line with the assumption made in this paper, the experts chosen in this setting are, on average, better than the model trained. We consider $2$ scenarios:
\begin{itemize}
    \item We have access to a single expert's annotations. For CIFAR-100, we generate a synthetic expert with $70\%$ accuracy. To motivate this choice, we ran a control study where we asked $20$ participants to classify $15$ randomly chosen CIFAR-100 examples. We found participants had average accuracy of $69\%$ with a standard error of $\approx 2.5\%$. For the CIFAR-10H dataset, we randomly sample a label from the predictive distribution provided.
    \item We have access to multiple expert annotations. This is an ensemble of the above experts, and the predicted class is chosen through majority voting for both datasets. For the CIFAR-100, we generate predictions from $5$ experts. 
\end{itemize}

\begin{table*}[htb]
\resizebox{\textwidth}{!}{%
\begin{tabular}{@{}ccccccc@{}}
\toprule
\rowcolor[HTML]{FFFFFF} 
\cellcolor[HTML]{FFFFFF} &
  \cellcolor[HTML]{FFFFFF} &
  \cellcolor[HTML]{FFFFFF} &
  \cellcolor[HTML]{FFFFFF} &
  \multicolumn{3}{c}{\cellcolor[HTML]{FFFFFF}\scriptsize{\textbf{Predictive Set Size of Non-Deferred Examples}}} \\ \cmidrule(l){5-7} 
\rowcolor[HTML]{FFFFFF} 
\scriptsize{\multirow{-2}{*}{\cellcolor[HTML]{FFFFFF}\textbf{\begin{tabular}[c]{@{}c@{}}Deferral \\ Rate $b$\end{tabular}}}} &
  \scriptsize{\multirow{-2}{*}{\cellcolor[HTML]{FFFFFF}\textbf{\begin{tabular}[c]{@{}c@{}}Classifier\\ Accuracy\end{tabular}}}}&
  \scriptsize{\multirow{-2}{*}{\cellcolor[HTML]{FFFFFF}\textbf{\begin{tabular}[c]{@{}c@{}}System Accuracy\\ (Single Expert)\end{tabular}}}} &
  \scriptsize{\multirow{-2}{*}{\cellcolor[HTML]{FFFFFF}\textbf{\begin{tabular}[c]{@{}c@{}}System Accuracy\\ (Multiple Experts)\end{tabular}}}} &
  \scriptsize{\textbf{RAPS}} &
  \scriptsize{\textbf{APS}} &
  \textbf{\scriptsize{LAC}} \vspace{-1mm}\\ \midrule
\vspace{-1mm}
\scriptsize{0} &
  \scriptsize{65.18 $\pm$ 0.30} &
  \scriptsize{65.18 $\pm$ 0.30} &
  \scriptsize{65.18 $\pm$ 0.30} &
  \scriptsize{3.75 $\pm$ 0.06} &
  \scriptsize{4.61 $\pm$ 0.08} &
  \scriptsize{3.26 $\pm$ 0.03} \\
  \vspace{-1mm}
\scriptsize{0.05} &
  \scriptsize{68.39 $\pm$ 0.31} &
  \scriptsize{68.04 $\pm$ 0.32} &
  \scriptsize{68.91 $\pm$ 0.33} &
  \scriptsize{3.22 $\pm$ 0.05} &
  \scriptsize{4.16 $\pm$ 0.06} &
  \scriptsize{2.48 $\pm$ 0.03} \\ 
  \vspace{-1mm}
\scriptsize{0.10} &
  \scriptsize{69.92 $\pm$ 0.24} &
  \scriptsize{69.95 $\pm$ 0.31} &
  \scriptsize{71.53 $\pm$ 0.35} &
 \scriptsize{2.81 $\pm$ 0.05} &
  \scriptsize{4.05 $\pm$ 0.06} &
  \scriptsize{2.13 $\pm$ 0.04} \\ 
  \vspace{-1mm}
\scriptsize{0.20} &
  \scriptsize{72.98 $\pm$ 0.30} &
  \scriptsize{72.25 $\pm$ 0.30} &
  \scriptsize{78.99 $\pm$ 0.40} &
  \scriptsize{2.36 $\pm$ 0.07} &
  \scriptsize{2.93 $\pm$ 0.10} &
  \scriptsize{2.07 $\pm$ 0.03}\vspace{1mm} \\  \bottomrule
\end{tabular}
}
\resizebox{\textwidth}{!}{%
\begin{tabular}{@{}ccclccc@{}}
\toprule
\rowcolor[HTML]{FFFFFF} 
\cellcolor[HTML]{FFFFFF} &
  \cellcolor[HTML]{FFFFFF} &
  \cellcolor[HTML]{FFFFFF} &
  \multicolumn{1}{c}{\cellcolor[HTML]{FFFFFF}} &
   \multicolumn{3}{c}{\cellcolor[HTML]{FFFFFF}\scriptsize{\textbf{Predictive Set Size of Non-Deferred Examples}}} \\ \cmidrule(l){5-7} 
\rowcolor[HTML]{FFFFFF}
 \scriptsize{\multirow{-2}{*}{\cellcolor[HTML]{FFFFFF}\textbf{\begin{tabular}[c]{@{}c@{}}Deferral \\ Rate $b$\end{tabular}}}} &
   \scriptsize{\multirow{-2}{*}{\cellcolor[HTML]{FFFFFF}\textbf{\begin{tabular}[c]{@{}c@{}}Classifier\\ Accuracy\end{tabular}}}} &
   \scriptsize{\multirow{-2}{*}{\cellcolor[HTML]{FFFFFF}\textbf{\begin{tabular}[c]{@{}c@{}}System Accuracy\\ (Single Expert)\end{tabular}}}} &
  \scriptsize{\multirow{-2}{*}{\cellcolor[HTML]{FFFFFF}\textbf{\begin{tabular}[c]{@{}c@{}}System Accuracy\\ (Multiple Experts)\end{tabular}}}} &
   \scriptsize{\textbf{RAPS}} &
   \scriptsize{\textbf{APS}} &
  \scriptsize{\textbf{LAC}} \vspace{-1mm}\\ \midrule
  \vspace{-1mm}
\scriptsize{0} &
 \scriptsize{ 82.02 $\pm$ 0.55} &
  \scriptsize{82.02 $\pm$ 0.55} &
  \scriptsize{82.02 $\pm$ 0.55} &
  \scriptsize{1.91 $\pm$ 0.03} &
  \scriptsize{2.83 $\pm$ 0.05} &
  \scriptsize{2.47 $\pm$ 0.12} \\
  \vspace{-1mm}
\scriptsize{0.05} &
  \scriptsize{84.41 $\pm$ 0.69} &
  \scriptsize{84.31 $\pm$ 0.65} &
  \scriptsize{84.64 $\pm$ 0.30} &
  \scriptsize{1.87 $\pm$ 0.08} &
  \scriptsize{2.76 $\pm$ 0.10} &
  \scriptsize{2.25 $\pm$ 0.08} \\
  \vspace{-1mm}
\scriptsize{0.10} &
  \scriptsize{86.12 $\pm$ 0.67} &
  \scriptsize{86.53 $\pm$ 0.68} &
  \scriptsize{88.12 $\pm$ 0.61} &
  \scriptsize{1.73 $\pm$ 0.08} &
  \scriptsize{2.56 $\pm$ 0.07} &
  \scriptsize{1.90 $\pm$ 0.15} \\ 
  \vspace{-1mm}
\scriptsize{0.20} &
  \scriptsize{88.97 $\pm$ 0.50} &
  \scriptsize{89.43 $\pm$ 0.64} &
  \scriptsize{91.46 $\pm$ 0.32} &
  \scriptsize{1.49 $\pm$ 0.06} &
  \scriptsize{2.13 $\pm$ 0.11} &
  \scriptsize{1.51 $\pm$ 0.09} \\  \bottomrule
\end{tabular}%
}
\caption{D-CP Predictive Set Size, System Accuracy, and Classifier Accuracy on the CIFAR-100 (top) and CIFAR-10H (bottom) datasets for the deferral scheme in \protect\cite{mozannar2020consistent} and the 3 CP schemes ($\alpha = 0.1$, 5 Trials, $95\%$ CI). Even in the low deferral rate regime, we not only obtain smaller set sizes across all CP schemes tested, but also benefit from increased human-AI system accuracy. While having multiple experts does not further improve the predictive set size for this deferral policy, we benefit from further improved system accuracy.}
\label{tab:cifar10h_results}
\end{table*}

Our deferral policy is based on the loss function in \cite{mozannar2020consistent}. We train a WideResNet \cite{zagoruyko2017wide} classifier $m_\theta(x): \mathcal{X} \rightarrow \mathcal{Y} \cup \perp$ on CIFAR-10H and CIFAR-100 for $5$ and $10$ epochs respectively using the learning rate schedule in \cite{mozannar2020consistent}. $\perp$ represents the action of deferral to an expert $h(x)$. We modify the loss function used in this work as below:
    \begin{align*}
        \label{eqn:consistent_estimators_loss}
        \mathcal{L}_{\textrm{CE}}(h,x,y,m_\theta) &= -(\mathcal{I}_{h(x)\neq y} + \alpha \mathcal{I}_{h(x) = y})\log m_\theta(x)_y\\&-\beta\mathcal{I}_{h(x)=y}\log m_\theta(x)_{\perp}
    \end{align*}
    where we set $\alpha = 1$ and vary the $\beta \in [0,1]$ penalty term to control the deferral rate. The policy $\pi(x)$ is therefore:
        \[ \pi(x) = \begin{cases} 
            \label{eqn:policy}
          1 & \textrm{argmax}_{y \in \mathcal{Y}\cup\perp} m_\theta(x)_y = |\mathcal{Y}\cup\perp| \\
          0 & \textrm{otherwise}
        \end{cases}
        \]
    To compute conformity scores, we renormalize the softmax probabilities for examples where $\pi(x) = 0$ using Bayes' rule:
    \begin{align*}
     p(y|x,\pi(x) = 0,\theta) &= \frac{p(y\neq |\mathcal{Y}\cup\perp||x,y,\theta)p(y|x,\theta)}{p(y\neq |\mathcal{Y}\cup\perp||x,\theta)} \\
                       &= \frac{p(y|x,\theta)}{p(y \neq |\mathcal{Y}\cup\perp||x,\theta)}
\end{align*}
\vspace{-3mm}
\begin{figure*}[tb]
   \centering
       \includegraphics[scale=0.4]{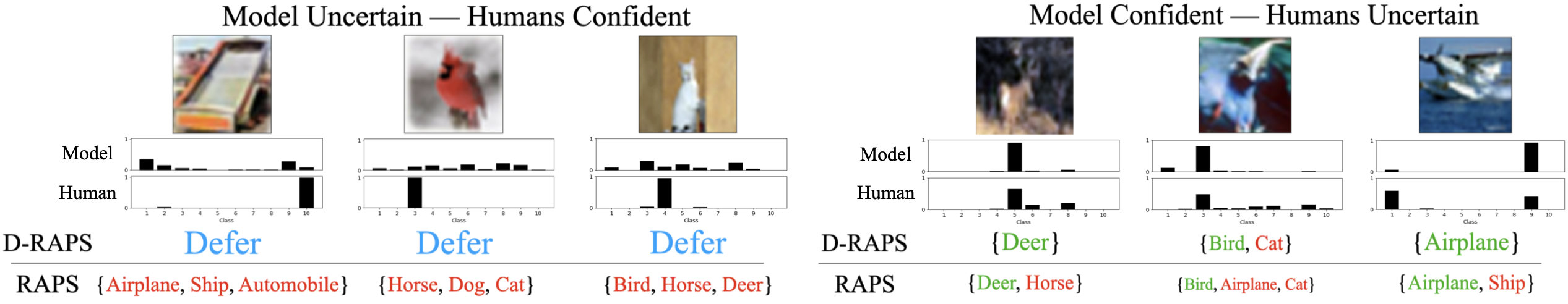}
    \caption{D-RAPS vs RAPS on some examples in the CIFAR-10H dataset ($\alpha = 0.05$, $b = 0.2$). Deferring on examples where experts are more confident than the model provides smaller sets on examples where the model is more confident than the expert. Thus, we leverage the strengths of both the model and the expert.} \label{fig:dcp_examples}
\end{figure*}
\begin{figure}[H]
    \centering
    \includegraphics[scale=0.27]{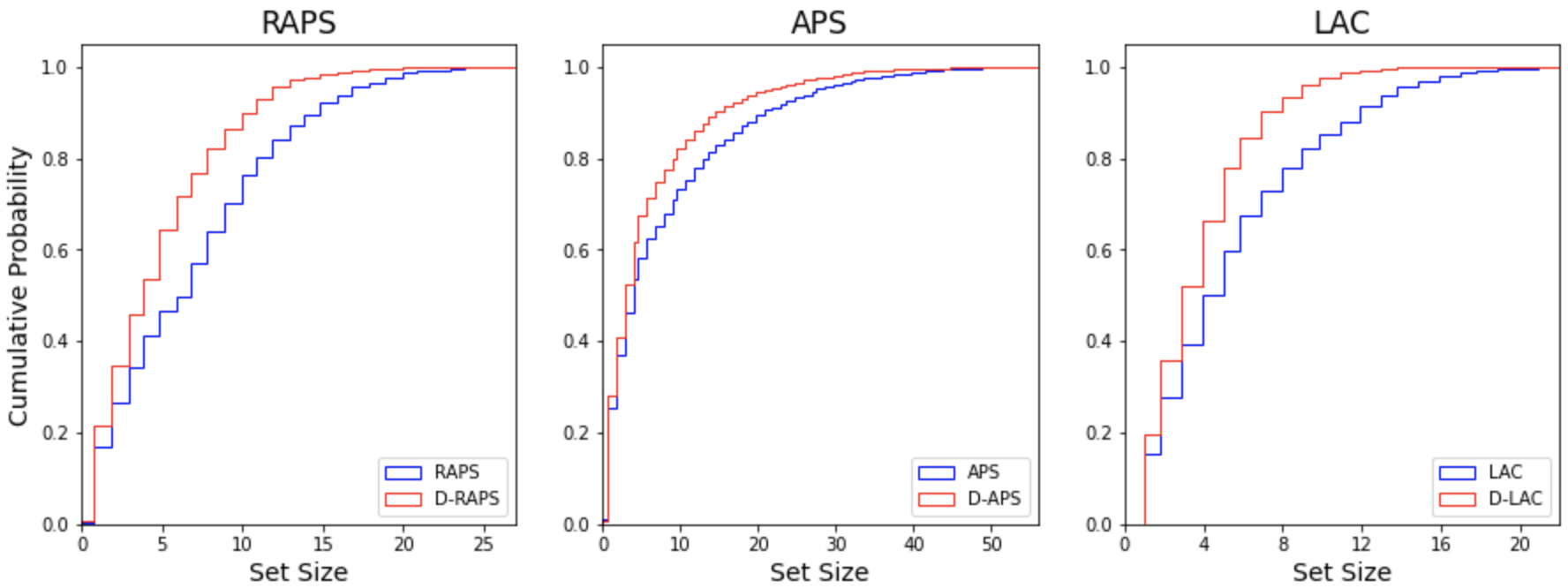}
    \caption{Cumulative CP and D-CP Set Size Distribution of Non-Deferred Examples in the CIFAR-100 dataset ($\alpha = 0.05$, deferral rate $b = 0.2$, Single Expert).}
    \label{fig:cumulative_set_size_dist_dcp}
\end{figure}
\vspace{-2mm}
In our experiments, we did not notice any statistically significant difference in accuracy of non-deferred examples or predictive set sizes when employing multiple experts as opposed to a singular expert, at least in the deferral rate regimes tested. Because we are performing experiments in the low deferral rate regime, it is likely that the deferral scheme defers similar examples to both expert types - examples the model is sure the expert(s) will get right. Thus, in Table \ref{tab:cifar10h_results}, the classifier accuracy and predictive set sizes are representative for both singular and multiple experts. However, we benefit from increased system accuracy by using ensemble voting across multiple experts. In addition, per Table \ref{tab:cifar10h_results} and Figure \ref{fig:cumulative_set_size_dist_dcp}, our scheme ensures smaller set sizes across all conformal methods and deferral rates tested. Increasing the deferral rate reduces the predictive set size. In Figure  \ref{fig:dcp_examples}, the model and expert have a mutually beneficial relationship: the model provides smaller predictive sets on examples the expert is more uncertain on and defers examples it is less certain of than an expert.

\textbf{Takeaway:} \textit{D-CP provides smaller predictive set sizes on non-deferred examples for the same level of coverage. For the policy in \cite{mozannar2020consistent}, while the number of experts does not make a difference in the resulting predictive set size of non-deferred examples, having more experts predict through majority voting improves the system accuracy.}


\section{Evaluation on Experts}
Our second human subject experiment focuses on establishing the value of smaller set predictions and learning to defer - the 2 promises of D-CP. We choose another set of $15$ examples from the CIFAR-100 test set for which we generate RAPS prediction sets with error rate $\alpha = 0.1$ and D-RAPS prediction sets with deferral rate $0.2$ and error rate $\alpha = 0.1$. We select $12$ non-deferred examples at random wherein the D-RAPS predictive set is smaller than the RAPS predictive set, but the ground truth labels are contained in both sets. Lastly, we choose the remaining $3$ deferred examples where the model is underconfident, i.e. the ground truth label is not in the RAPS set. This aims to establish the value of deferral in situations where the model may provide misleading predictions. We ask participants the same questions as in Section $3$ and follow a similar recruitment procedure as in Section 3 ($60$ participants total, $2$ groups, reward of £$10$ per hour prorated).

\begin{table}[H]
\centering
\resizebox{8.5cm}{!}{%
\begin{tabular}{ccccc}
\hline
\textbf{Metric}            & \textbf{D-RAPS}   & \textbf{RAPS}     & \textbf{$p$ value} & \textbf{Effect Size} \\ \hline
\textbf{Accuracy}          & 0.76 $\pm$ 0.08 & 0.67 $\pm$ 0.05 & \textbf{0.002} & 0.832       \\ 
\textbf{Reported Utility} & 7.93 $\pm$ 0.39 & 6.32 $\pm$ 0.60 & $<$ \textbf{0.001} & 1.138            \\
\textbf{Reported Confidence}        & 7.31 $\pm$ 0.29 & 7.28 $\pm$ 0.29 & 0.862 & 0.046            \\
\textbf{\begin{tabular}[c]{@{}c@{}}Reported Trust in Model\end{tabular}} & 8.00 $\pm$ 0.45 & 6.87 $\pm$ 0.61 & \textbf{0.006} & 0.754\\ \hline
\end{tabular}%
}
\caption{D-RAPS vs RAPS: All Examples}
\label{tab:d_raps_vs_raps_all}
\end{table}


\vspace{-2mm}

\begin{table}[H]
\centering
\resizebox{8.5cm}{!}{
\begin{tabular}{ccccc}
\hline
\textbf{Metric}            & \textbf{D-RAPS}   & \textbf{RAPS}      & \textbf{$p$ value} & \textbf{Effect Size}\\ \hline
\textbf{Accuracy}          & 0.88 $\pm$ 0.05 & 0.81 $\pm$ 0.04  & 0.058 & 0.508           \\ 
\textbf{Reported Utility} & 7.93 $\pm$ 0.39 & 6.19 $\pm$ 0.62  & \textbf{$<$ 0.001} & 1.211            \\
\textbf{Reported Confidence}        & 7.78 $\pm$ 0.33 & 7.31 $\pm$ 0.34 & 0.059 & 0.507           \\ \hline
\end{tabular}%
}
\caption{D-RAPS vs RAPS: Non-Deferred Examples}
\label{tab:d_raps_vs_raps_ndef}
\end{table}

Tables $5$ and $6$ suggest that there is a statistically significant increase in expert accuracy when the D-CP scheme is employed, with borderline significance on non-deferred examples. Even though participants did not perform as well on deferred examples in general, we noticed that their accuracy was still higher than when they were shown CP sets, which contained misleading labels. Equally interestingly, on examples where both RAPS and D-RAPS sets contain the ground truth label (i.e. the non-deferred examples), the perceived utility of D-CP sets is higher ($p< 0.001$). As D-RAPS sets are smaller, this shows that, for the same confidence level, smaller set sizes are more useful to experts and therefore a preferred choice for human-AI teams. Table $5$ also shows a statistically significant difference in reported trust in the model between D-RAPS and RAPS. These are important considerations for real world human-AI teams. We provide further results in the Supplementary Material which show that participants display a bias towards towards incorrect predictions shown in larger CP sets, warranting caution when deploying models with large CP sets in human-AI teams.
\\
\noindent\textbf{Takeaway:} \textit{There are statistically significant improvements increases in reported utility, trust, and accuracy in the model when the D-CP scheme is employed.}
\section{Conclusion}
In this paper, we explored the importance of set valued predictions for human-AI teams. We first showed experts find CP predictive sets more useful than Top-1 predictions. However, CP set sizes can be very large for some examples, especially in large label spaces. Thus, we motivate the need for combining the ideas of learning to defer and set valued predictions. We introduce D-CP, a general practical scheme that defers some examples and performs CP on others. Empirical and theoretical evidence shows that the scheme provides smaller set sizes on non-deferred examples for any CP method compared to performing CP on all examples. The scheme allows the model and expert to have a mutually beneficial relationship by leveraging the expert and the model's respective strengths. Our human subject experiments show that, compared to CP, experts find the smaller D-CP predictive sets more useful, the model more trustworthy, and are more accurate. We hope that this informs a) future research on improved deferral policies that consider the predictive uncertainty of the model and b) larger scale human evaluations that uncover specific, desirable properties of a predictive set.

\clearpage
\section*{Acknowledgements}
The authors thank the anonymous reviewers for their thorough feedback. UB acknowledges support from DeepMind and the Leverhulme Trust via the Leverhulme Centre for the Future of Intelligence (CFI), and from the Mozilla Foundation. AW acknowledges support from a Turing AI Fellowship under grant EP/V025279/1, The Alan Turing Institute, and the Leverhulme Trust via CFI.
\bibliographystyle{named}
\bibliography{ijcai22}

\clearpage
\appendix
\section{Proofs}
\begin{theorem}
\label{theorem:theorem1}
Consider a deferral policy $\pi(x): \mathcal{X} \rightarrow \{0,1\}$ and a classification model $m_\theta(x): \mathcal{X} \rightarrow \mathcal{Y}$ acting on a dataset $\mathcal{D} = \{(X_1,Y_1),...(X_n,Y_n)\}$. Define some conformity measure $\tau(x,y)$ such that if $p(\hat{y}|\hat{x}) \geq p(y|x)$ then $\tau(\hat{x},\hat{y}) \geq \tau(x,y)$ for any softmax probabilities $p(\hat{y}|\hat{x}),p(y|x)$, labels $\hat{y},y \in \mathcal{Y}$, and inputs $\hat{x},x \in \mathcal{X}$. If the expected loss on non-deferred examples is lower than the original loss, i.e. $\mathbb{E}_{(x,y)|\pi(x) = 0}[\mathcal{L}(y,m_{\theta}(x))] \leq \mathbb{E}_{(x,y)}[\mathcal{L}(y,m_{\theta}(x))]$, then the average conformal predictive set of non-deferred examples will contain fewer incorrect labels on average. 
\end{theorem}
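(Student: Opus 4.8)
The plan is to reduce the claim to a single inequality between calibration thresholds and treat everything else as elementary bookkeeping. Write $\tau_{cal}$ for the threshold calibrated on the full dataset and $\tau_{cal}'$ for the one calibrated on the non-deferred subset $\mathcal{D}'_{cal} = \{(X_i,Y_i) : \pi(X_i)=0\}$. The central target is $\tau_{cal}' \geq \tau_{cal}$. Once this holds, I would invoke the definition $\Gamma(x,t) = \{y : \tau(x,y) \geq t\}$ to observe that raising the threshold only removes labels, so $\Gamma(x,\tau_{cal}') \subseteq \Gamma(x,\tau_{cal})$ for every non-deferred $x$. Writing the number of incorrect labels as $|\Gamma(x,t)| - \mathcal{I}_{Y \in \Gamma(x,t)}$, nesting gives $|\Gamma(x,\tau_{cal}')| \le |\Gamma(x,\tau_{cal})|$ while at most one removed label is the true one, so the incorrect-label count weakly decreases pointwise, hence on average. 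The coverage guarantees given earlier are what let me assert that this shrinkage is not achieved by systematically discarding the true label: both sets retain $Y$ with probability at least $1-\alpha$.

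The substantive work is the threshold inequality, and here I would exploit the monotonicity hypothesis directly. Since $p(\hat y\,|\,\hat x) \ge p(y\,|\,x)$ implies $\tau(\hat x,\hat y) \ge \tau(x,y)$, the conformity score of a true label is a nondecreasing function $g$ of its softmax probability, $\tau(X_i,Y_i) = g(p(Y_i\,|\,X_i))$. Taking the loss to be a decreasing function of the true-label probability, equivalently of $\tau(X,Y)$ — for instance the miscoverage indicator $\mathcal{L}(Y,m_\theta(X)) = \mathcal{I}_{\tau(X,Y) < \tau_{cal}}$ — the hypothesis $\mathbb{E}_{\pi=0}[\mathcal{L}] \le \mathbb{E}[\mathcal{L}]$ becomes a comparison of the true-label conformity CDFs at the point $\tau_{cal}$, namely $F_{\pi=0}(\tau_{cal}) \le F(\tau_{cal}) = \alpha$. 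Since $\tau_{cal}'$ is by construction the $\alpha$-quantile of $F_{\pi=0}$, and a CDF and its quantile are monotone inverses, $F_{\pi=0}(\tau_{cal}) \le \alpha$ forces $\tau_{cal}' \ge \tau_{cal}$. This is exactly the step converting ``the expert absorbs the examples the model does poorly on'' into ``the bar for entering the prediction set rises.''

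The step I expect to be the main obstacle is precisely this bridge from an \emph{expectation} inequality on the loss to a \emph{quantile} inequality on the conformity scores: for an arbitrary loss the two are incomparable, since means and quantiles can move in opposite directions. The clean route is to read the expected-loss hypothesis as first-order stochastic dominance of the non-deferred true-label conformity distribution over the original one, which is automatic when the loss is monotone in $\tau(X,Y)$ and one compares CDFs at the relevant threshold as above; the fully transparent instance is LAC, where $g$ is the identity and $\tau(X,Y)=p(Y\,|\,X)$. I would therefore state the lemma under the monotone-conformity hypothesis already present in the theorem, handle the empirical-quantile version with the usual order-statistic argument (which also produces the $\tfrac{1}{n+1}$ slack appearing in the distinct-scores coverage bound), and note that extending to RAPS and APS — whose scores are monotone in the softmax but not equal to it — is what the surrounding experiments are meant to support empirically rather than prove.
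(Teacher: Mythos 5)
Your proposal follows the same skeleton as the paper's proof --- establish $\tau'_{cal}\geq\tau_{cal}$, then use the nesting $\Gamma(x,\tau'_{cal})\subseteq\Gamma(x,\tau_{cal})$ --- but you handle the crucial step differently, and more honestly. The paper simply asserts that because $\mathbb{E}_{(x,y)|\pi(x)=0}[\tau(Y{=}y,x)]\geq\mathbb{E}_{(x,y)}[\tau(Y{=}y,x)]$ the $\alpha$-quantile must also rise; as you correctly point out, a mean inequality does not imply a quantile inequality in general (and the paper's preceding step, from ``lower expected loss'' to ``higher expected true-label probability,'' is likewise only valid for losses affine in $p(y|x)$). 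Your fix --- reading the loss as the miscoverage indicator so that the hypothesis is literally a statement about the CDF of true-label conformity scores at $\tau_{cal}$, from which $F_{\pi=0}(\tau_{cal})\leq\alpha$ and hence $\tau'_{cal}\geq\tau_{cal}$ follow by the definition of the quantile --- is a clean and rigorous route to the threshold inequality, at the cost of narrowing the class of losses for which the theorem holds. That trade is worth making explicit, but it is a reasonable reading of the (underspecified) hypothesis.

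One substantive mismatch remains. Your nesting argument yields a \emph{same-population} comparison: for each non-deferred $x$ the incorrect-label count at $\tau'_{cal}$ is at most that at $\tau_{cal}$, hence $\mathbb{E}_{\pi=0}[\mathrm{inc}(\tau'_{cal})]\leq\mathbb{E}_{\pi=0}[\mathrm{inc}(\tau_{cal})]$. The paper's formal conclusion is the \emph{cross-population} inequality $\mathbb{E}_{\pi=0}[\mathrm{inc}(\tau'_{\alpha})]\leq\mathbb{E}_{(x,y)}[\mathrm{inc}(\tau_{\alpha})]$, and to reach it the paper inserts an intermediate step $\mathbb{E}_{\pi=0}[\mathrm{inc}(\tau_{\alpha})]\leq\mathbb{E}_{(x,y)}[\mathrm{inc}(\tau_{\alpha})]$, justified (again loosely, via expected sums of incorrect-label scores) by the claim that non-deferred examples put less probability mass on incorrect labels. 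Your proof does not supply this second inequality, so as written it proves a weaker --- arguably more practically meaningful --- statement than the one the paper formalizes. If you want to match the paper's conclusion exactly, you need an additional argument that non-deferred examples already have no more incorrect labels above the \emph{old} threshold than the full population does; the coverage guarantee you invoke does not give you this, since it controls only whether the true label survives, not how many incorrect ones do.
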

\begin{proof}
Because the expected loss on non-deferred examples is lower, we know that:
\begin{equation}
    \mathbb{E}_{(x,y)|\pi(x) = 0}[p(Y=y|x)] \geq \mathbb{E}_{(x,y)}[p(Y=y|x)]
\end{equation}
From our definition of the conformity measure $\tau(x,y)$ above:
\begin{equation}
    \mathbb{E}_{(x,y)|\pi(x) = 0}[\tau(Y=y,x)] \geq \mathbb{E}_{(x,y)}[\tau(Y=y,x)]
\end{equation}
for any ground truth label $y$ associated with an example $x$. Therefore,
\begin{alignat*}{2}
    & \mathbb{E}_{(x,y)|\pi(x) = 0}[\sum_{\substack{i=1 \\ i \neq y}}^K p(Y = i|x)] \leq \mathbb{E}_{(x,y)}[\sum_{\substack{i=1 \\i \neq y}}^K p(Y = i|x)]\\
    &\Rightarrow \mathbb{E}_{(x,y)|\pi(x) = 0}[\sum_{\substack{i=1\\i\neq y}}^K \tau(Y=i,x)] \leq \mathbb{E}_{(x,y)}[\sum_{\substack{i=1\\i\neq y}}^K \tau(Y=i,x)]\\
\end{alignat*}
Because $\mathbb{E}_{(x,y)|\pi(x) = 0}[\tau(Y=y,x)] \geq \mathbb{E}_{(x,y)}[\tau(Y=y,x)]$, 
$\tau^\prime_{\alpha} = \textrm{Quantile}(\alpha, \{\tau(Y=y,x)|(x,y) \in \mathcal{D}, \pi(x) = 0\}) \geq \tau_{\alpha} = \textrm{Quantile}(\alpha,\{\tau(Y=y,x)|(x,y) \in \mathcal{D}\})$ for any user defined error tolerance $\alpha \in [0,1]$. Thus:
\begin{alignat*}{2}
    &\text{\scriptsize$\mathbb{E}_{(x,y)|\pi(x) = 0}\left[\sum_{\substack{i=1\\i\neq y}}^K\mathbb{I}_{ \tau(Y=i,x)\geq \tau^\prime_{\alpha}}\right] \leq \mathbb{E}_{(x,y)|\pi(x) = 0}\left[\sum_{\substack{i=1\\ i\neq y}}^K\mathbb{I}_{\tau(Y=i,x)\geq \tau_{\alpha}}\right]$} \\
    &\quad \quad \quad \quad \quad \quad \quad \quad \quad \quad \quad \quad \text{\scriptsize$\leq\mathbb{E}_{(x,y)}\left[\sum_{\substack{i=1\\ i\neq y}}^K\mathbb{I}_{\tau(Y=i,x)\geq \tau_{\alpha}}\right]$}
\end{alignat*}
This implies:
\begin{alignat*}{2}
    &\mathbb{E}_{(x,y)|\pi(x) = 0}[|\{\hat{y}|\tau(x,\hat{y}) \geq \tau^\prime_{\alpha}, \hat{y} \neq y\}|] \leq\\
    &\mathbb{E}_{(x,y)}[|\{\hat{y}|\tau(x,\hat{y}) \geq \tau_{\alpha}, \hat{y} \neq y\}|]
\end{alignat*}
\end{proof}
\vspace{1cm}
\section{Coverage Guarantees and Statistical Efficiency of D-CP}
For an Inductive Conformal Predictior (ICP), the coverage is defined as:
\begin{equation}
    \label{eqn:coverage}
    C = \frac{1}{n_{val}}\sum_{i=1}^{n_{val}}\mathcal{I}_{Y_i \in \Gamma(X_i,\tau_{cal})|\{(X_i,Y_i)\}_{i=1}^n}
\end{equation}
for a validation dataset of size $n_{val}$ and a calibration dataset of size $n$. While conformal prediction provides theoretical guarantees of the form in Equations $1$ and $2$, due to the finite number of samples and variations in test and training data distributions, ICP does not result in exact coverage in practice. \cite{Tibshirani2019} and \cite{Lei2016} report that the coverage of conformal intervals is highly concentrated around $1-\alpha$. Because D-CP ensures that samples in the calibration and validation sets remain exchangeable, we get similar coverage distributions for D-CP as we would for any CP method. This is illustrated in Figure \ref{fig:coverage}. 
\begin{figure}[H]
    \centering
    \includegraphics[scale=0.32]{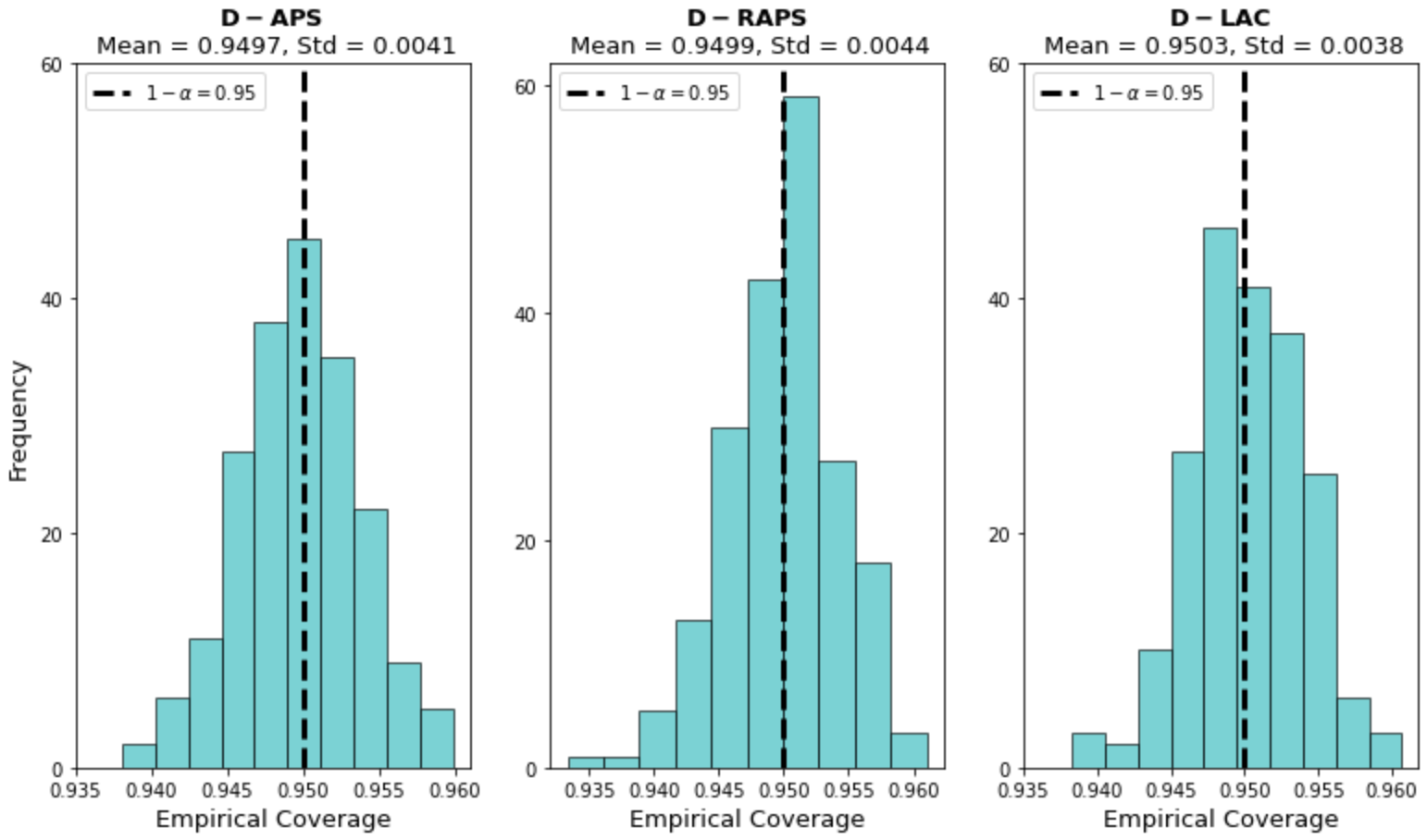}
    \includegraphics[scale=0.32]{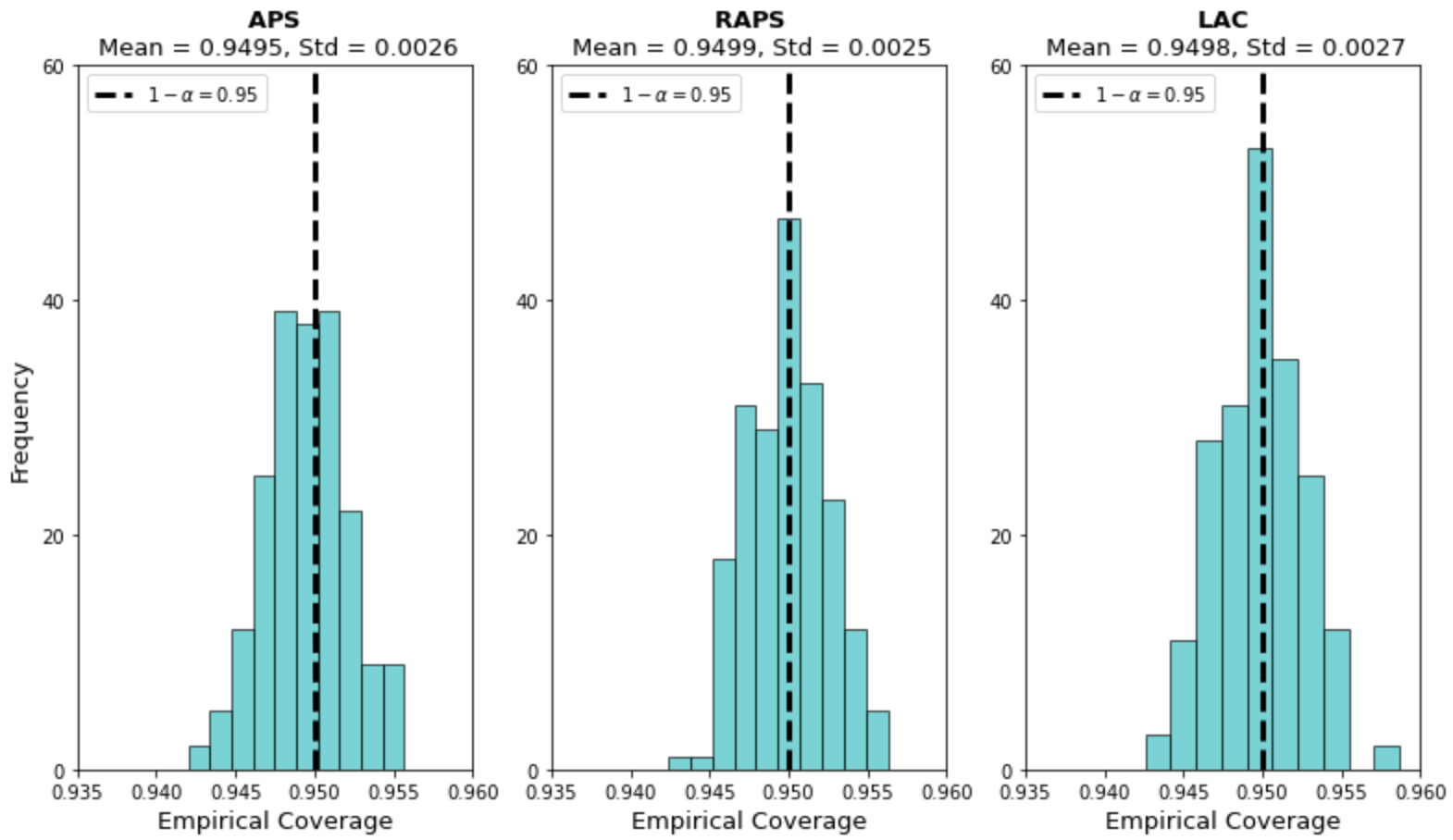}
    \caption{(top) Coverage Distribution on Non-Deferred Test Examples for Different D-CP Schemes: Deferral Rate $\beta \approx 0.2$. \\ (bottom) Coverage Distribution on All Test Examples for Different CP Schemes. For both D-CP and CP, $\alpha = 0.05$, Dataset = CIFAR-10H, Number of Trials = $200$, $n = 8000$, $n_{val} = 8000$}
    \label{fig:coverage}
\end{figure}
However, due to the reduced number of finite samples, we would expect a slight increase in the variance of the coverage of the estimator. This is evident in Figure \ref{fig:coverage}. \cite{Angelopoulos_Intro_2021} show that
the standard deviation of the obtained coverage in Equation \ref{eqn:coverage} can be expressed as:
\begin{align}
    \text{\scriptsize$\textrm{Std}(C) = \sqrt{\frac{(n+1-l)(n+n_{val}+1)l}{n_{val}(n+1)^2(n+2)}}
    = \mathcal{O}\bigg(\frac{1}{\sqrt{\min(n,n_{val})}}\bigg)$}
\end{align}
where $n$ and $n_{val}$ is the size of the calibration and test dataset respectively and $l = \lfloor (n+1)\alpha \rfloor$.
Given a deferral rate of $\beta$, the effective sizes of $n$ and $n_{cal}$ reduce by a factor of $1-\beta$ for D-CP, increasing the standard deviation of the average coverage by a factor of $\frac{1}{\sqrt{1-\beta}}$. The benefits of smaller predictive sets and human-AI complementarity therefore come at the price of a reduction of statistical efficiency. However, this is not a problem in practice as long as the model doesn't defer a large proportion of examples to an expert. \cite{Angelopoulos_Intro_2021} claim that a calibration size of $1000$ will be sufficient for most applications employing CP methods. For D-CP, given a model with, say a reasonable $20\%$ deferral rate, the calibration dataset need only be around $25\%$ larger than before to provide empirical coverage with the same variance as conventional CP methods.
\section{D-CP Algorithm for Experiments in Section 7}
Below, we present one instance of the D-CP algorithm that was used for experiments in this paper. Note that while the exact algorithm depends on the deferral policy being trained (e.g. using approaches in \cite{okati2021differentiable} or \cite{Wilder2020}), the main workflow followed is illustrated in Section $4$ in the paper.
\begin{algorithm}[H]
\caption{D-CP using the deferral policy in \protect\cite{mozannar2020consistent}}
\label{alg:dcp_monzannar}
\textbf{Input}: Classifier $m_\theta(x): \mathcal{X} \rightarrow \mathcal{Y}\cup\perp$, Training Set $\mathcal{D}$, Expert $h(x) \in \mathcal{Y}$ , Calibration Set $\mathcal{D}_{\textrm{cal}}$, Validation Set $\mathcal{D}_{\textrm{val}}$, Error Tolerance $\alpha$, Number of Epochs $N$, Learning Rate $\gamma$, Test Example $x_{\textrm{test}}$
\begin{algorithmic}[1]
\For {$i \in \{1,...N\}$}
\For {Batch $\mathcal{B} \in \mathcal{D}$}
    \State $\theta = \theta - \gamma\mathbb{E}_{(x,y) \in \mathcal{B}}[\nabla_{\theta} l(m_\theta(x),y,h(x))]$ \\ \Comment{Loss function in \cite{mozannar2020consistent}}
\EndFor
\EndFor
\State $D^\prime_{\textrm{cal}} = \varnothing$
\For {$(x,y) \in \mathcal{D}_{\textrm{cal}}$}
\If {$\textrm{argmax} m_\theta(x) \neq |\mathcal{Y}| + 1$}
\State $m^\prime_\theta = \textrm{softmax}(m_\theta)$ \quad \quad \quad\quad\Comment{Deferral Policy}
\State $m^\prime_\theta = \frac{m^\prime_\theta[1:|\mathcal{Y}|]}{1-m^\prime_\theta[|\mathcal{Y}|+1]}$ \quad\quad\quad\quad \Comment{Bayes' Rule}
\State $D^\prime_{\textrm{cal}} = D^\prime_{\textrm{cal}} \cup (x,y,m^\prime_\theta(x))$
\EndIf
\EndFor
\State $\tau_{cal}$ = $\alpha$ threshold conformity score learnt from conformal calibration on $\mathcal{D}^\prime_{cal}$
\If {$\textrm{argmax} m_\theta(x_{test}) \neq |\mathcal{Y}|+1$}
\State Output predictive set:\\ $\Gamma(x_{test}) = \{y|y\in \mathcal{Y}, \tau(x_{test},y) \geq \tau_{cal}\}$
\Else
\State Defer to expert $h(x_{test})$
\EndIf
\end{algorithmic}
\end{algorithm}
\section{Additional Results from Human Subject Experiments}
\subsection{Bias of Experts Towards Incorrect Model Predictions}
~\cite{bondi2021role} established for binary classifiers that model predictions influence expert decisions and that displaying incorrect predictions can cause experts to err in judgement when compared to purely deferring predictions. We report similar findings for set valued predictions in this paper.
To this end, we define the \textit{bias} towards incorrect predictions as the proportion of examples where an incorrect prediction made by an expert is found in the predictive set output by the model averaged across all subjects. That is, given experts $h$, examples $x$, the associated label $y(x)$, and the CP set $\Gamma(x)$:
\begin{equation}
    \text{Bias} = \mathbb{E}_{h,x}\big[\mathcal{I}_{h(x) \in \Gamma(x)}\mathcal{I}_{h(x) \neq y(x)}\big]
\end{equation}
\begin{table}[H]
\resizebox{8.6cm}{!}{%
\begin{tabular}{cccc}
\hline
\textbf{Metric} &
  \textbf{D-RAPS} &
  \textbf{\begin{tabular}[c]{@{}c@{}}RAPS \\ Non-Deferred Examples\end{tabular}} &
  \textbf{\begin{tabular}[c]{@{}c@{}}RAPS\\ Deferred Examples\end{tabular}} \\ \hline
Bias & 0.063 $\pm$ 0.035
   & 0.189 $\pm$ 0.046
   & 0.933 $\pm$ 0.069
   \\ \hline
\end{tabular}%
}
\caption{D-RAPS vs RAPS: Bias towards incorrect or misleading labels}
\label{tab:bias}
\end{table}
Comparing just the non-deferred examples (where both D-RAPS and RAPS sets contain the true label) we see that experts are much more biased towards incorrect predictions in RAPS sets than in D-RAPS sets. This is a consequence of RAPS sets containing more incorrect labels, which presents more scope for ambiguity. Another interesting observation is that on examples deferred by D-RAPS (whose RAPS sets contain only incorrect labels), expert are much more reliant on RAPS predictions. These findings warrant caution when deploying models with only CP wrappers in human-AI teams, as large, incoherent sets in critical settings can result in costly mistakes when expert bias their decisions heavily on model predictions. 
\subsection{Further Analysis of RAPS vs D-RAPS}
\begin{table}[H]
\vspace{-0.3cm}
\centering
\resizebox{8.75cm}{!}{%
\begin{tabular}{ccccccc}
\hline
\multirow{2}{*}{\textbf{Metric}} &
  \multirow{2}{*}{\textbf{RAPS}} &
  \multirow{2}{*}{\textbf{D-RAPS}} &
  \multirow{2}{*}{\textbf{N}} &
  \multirow{2}{*}{\textbf{p-value}} &
  \multirow{2}{*}{\textbf{Effect Size}} &
  \multirow{2}{*}{$\mathbf{N_{min}}$} \\
                              &        &        &    &        &       &         \\ \hline
\textbf{Accuracy (All)}       & 0.67 & 0.76 & 30 & \textbf{0.003} & 0.87 & 22  \\ \hline
\textbf{Accuracy (Easy)}      & 0.87 & 0.83 & 30 & 0.310 & 0.27 & 218 \\ \hline
\textbf{Accuracy (Difficult)} & 0.55 & 0.67 & 30 & \textbf{$<$ 0.001} & 1.04 & 16  \\ \hline
\end{tabular}%
}
\vspace{-0.1cm}
\caption{RAPS vs D-RAPS  Accuracy on All Examples. $N_{min}$ is the minimum sample size for each group needed for $p \leq 0.05$ with power $1-\beta = 0.8$ and $N$ is the experimental sample size of each group.}
\vspace{-0.3cm}
\label{tab:vrio}
\end{table}

Table 8 shows a power analysis on the results of the D-RAPS vs RAPS experiments. We divide the $15$ images chosen into $3$ difficulty groups - where difficulty is defined as the entropy of the model predictive distribution - and evaluate the statistical significance of the accuracy on the easiest and most difficult groups. It is seen that the accuracy increases the most on examples the model found difficult, which are by definition the most likely to be deferred. On the other hand, there is no increase in accuracy of easy examples. 
\newpage
\subsection{Human Subject Experiment Questions}
In order to aid reproducibility, we also show excerpts from the survey questionnaire for the RAPS vs Top-1 and D-RAPS vs RAPS experiments. 
\begin{itemize}
    \item Figures $9$ and $10$ are questions posed to participants across all experiments
    \item Figures $11$ and $12$ are questions posed to participants in the RAPS vs D-RAPS experiment.
    \item Figures $13$ and $14$ are questions posed to participants in the RAPS vs Top-1 experiment.
\end{itemize}

\begin{figure}[H]
    \centering
    \includegraphics[scale=0.4]{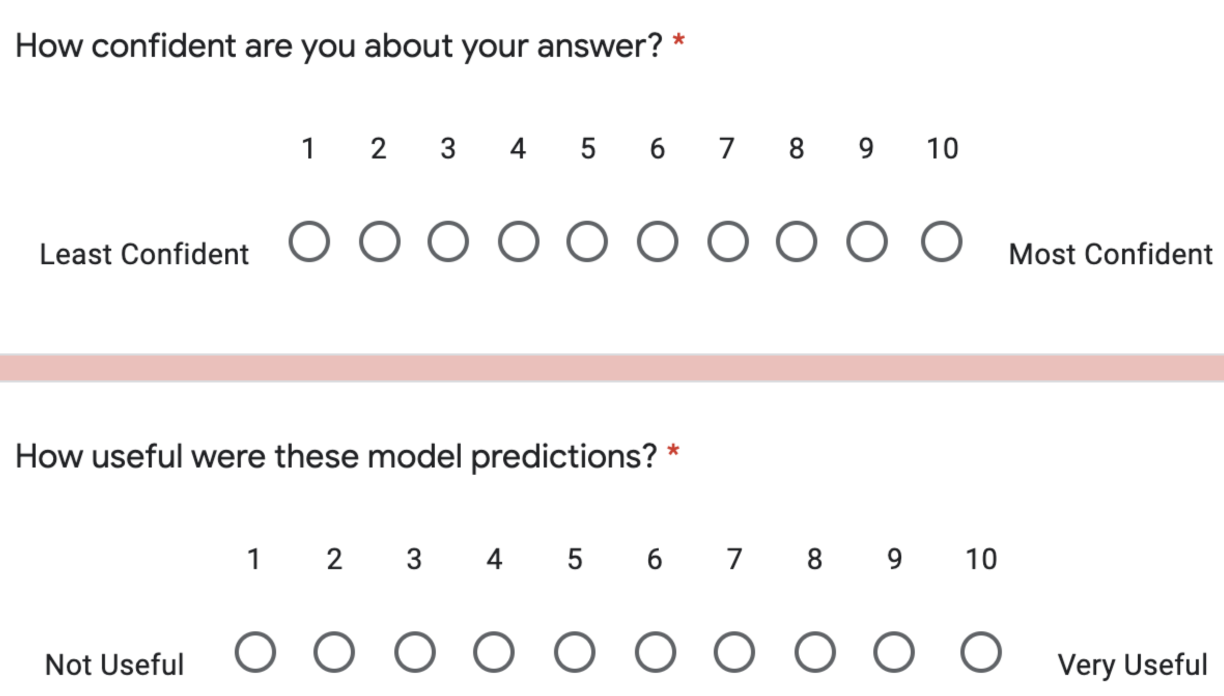}
    \caption{After each non-deferred example, we asked users these questions}
    \label{fig:survey_questions}
\end{figure}
\begin{figure}[H]
    \centering
    \includegraphics[scale=0.33]{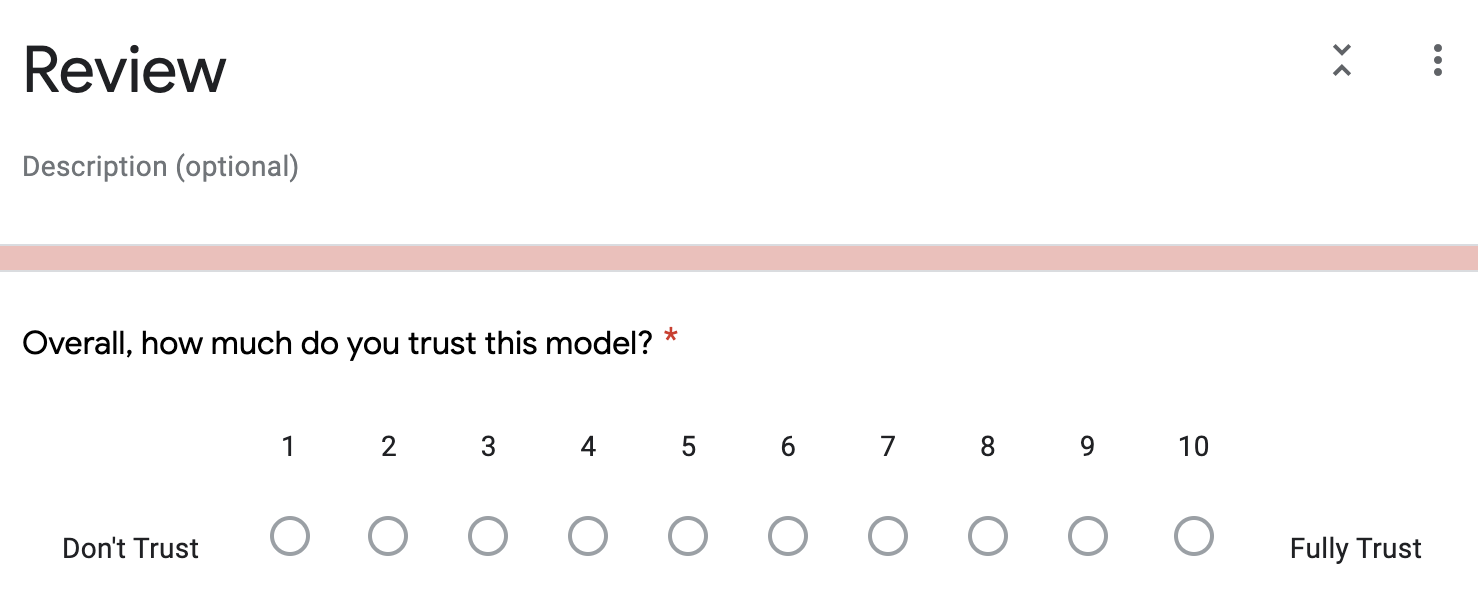}
    \caption{Additionally, at the end of the survey, we asked users to rate their trust in the model on a scale from $1-10$}
    \label{fig:trust}
\end{figure}

\begin{figure*}[htb]
    \centering
    \includegraphics[scale=0.4]{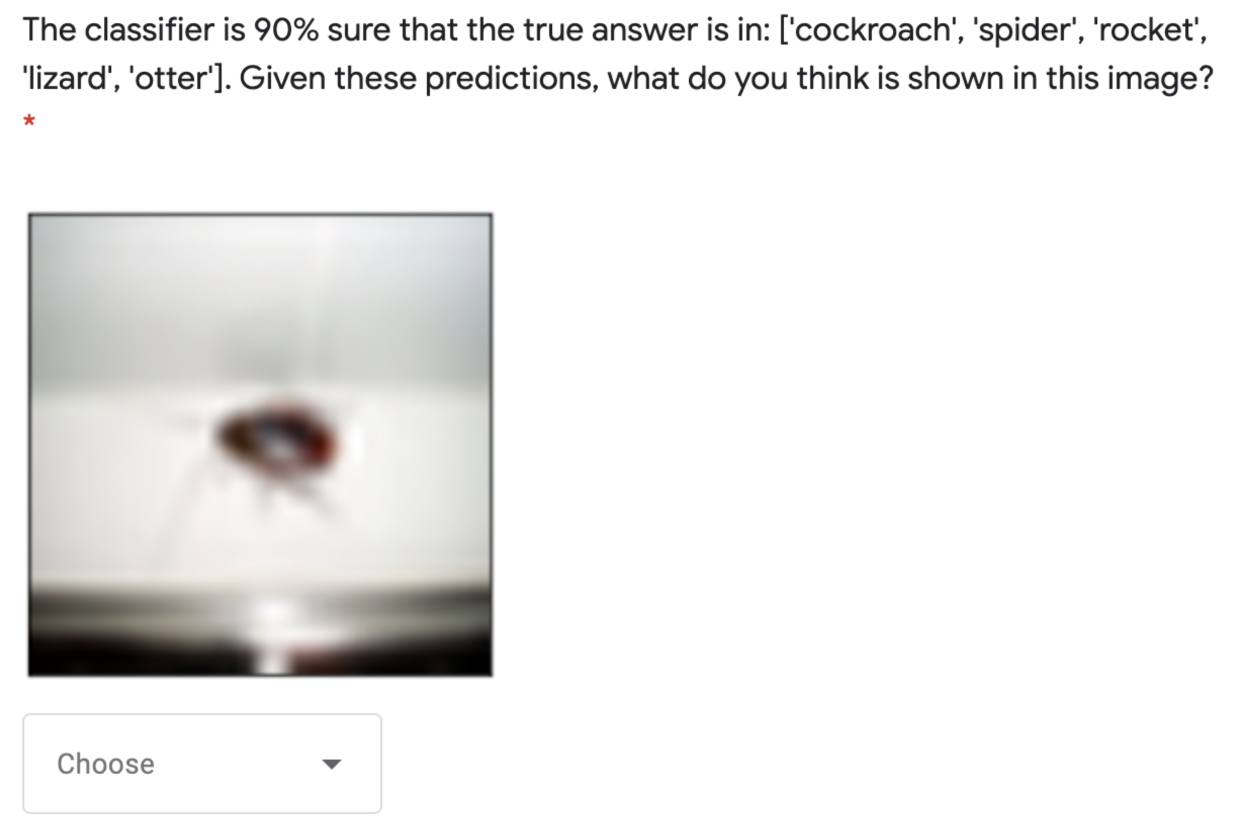}
     \includegraphics[scale=0.4]{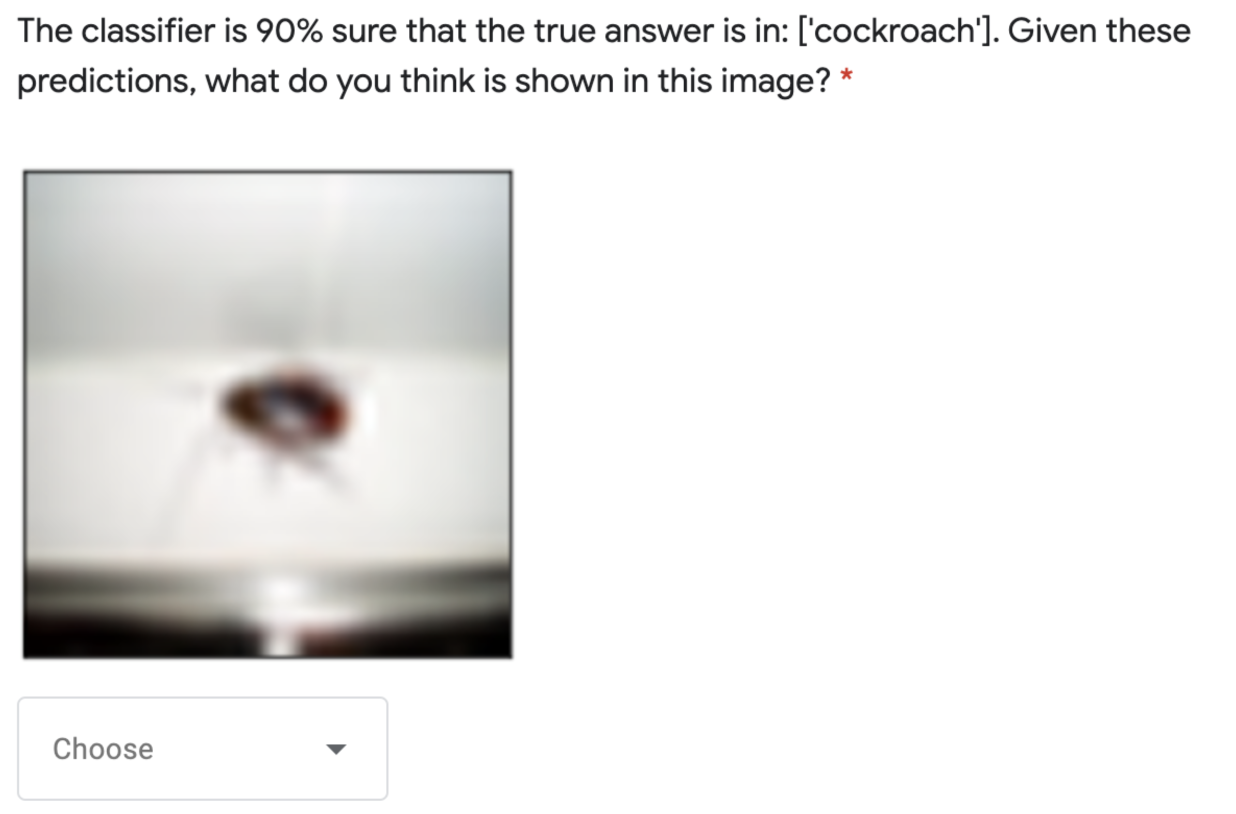}
    \caption{We first show an example of an image where both RAPS and D-RAPS sets contain the true label, but the D-RAPS set is smaller than the RAPS set. The true label for this image is: \textbf{Cockroach}. $15$ participants are shown the RAPS set (left image) and $15$ different participants are shown the D-RAPS set (right image)}
    \label{fig:RAPS_DRAPS_2_correct}
\end{figure*}
\begin{figure*}[htb]
    \centering
    \includegraphics[scale=0.4]{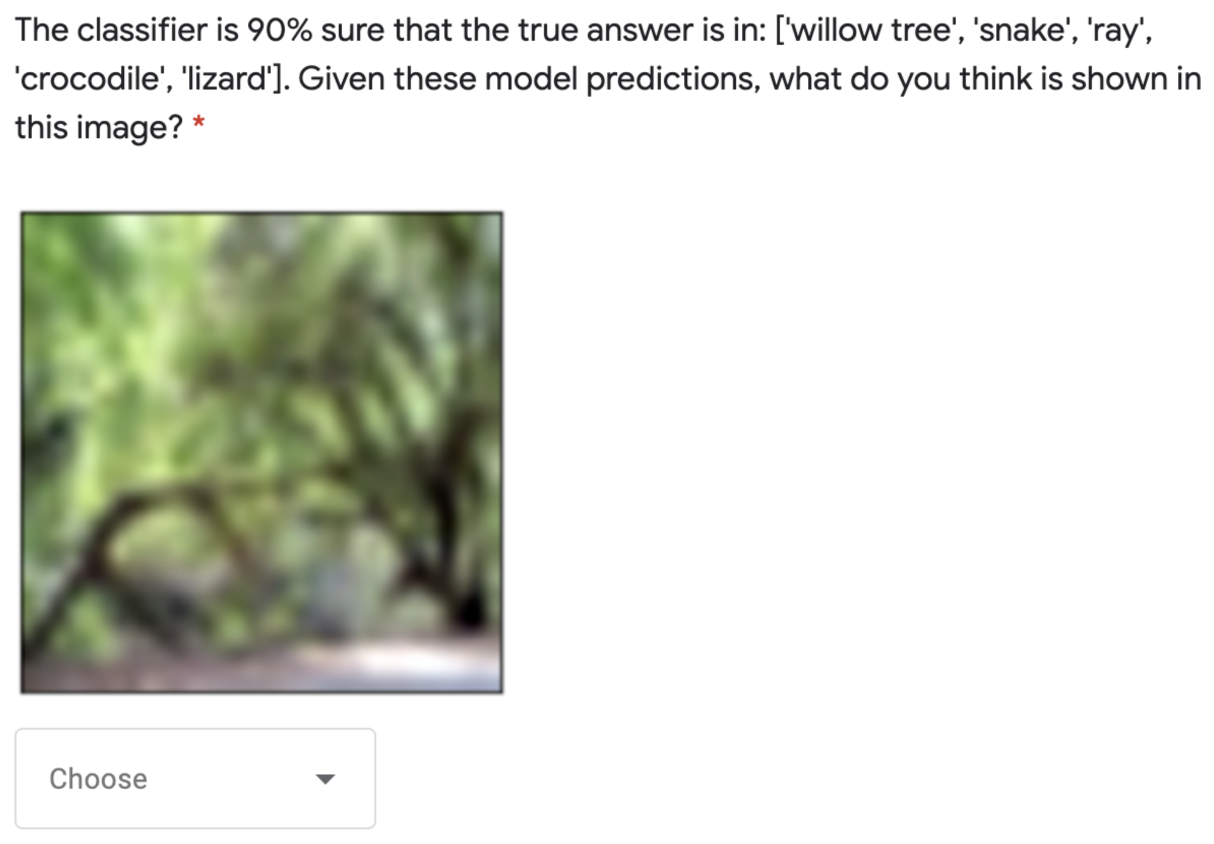}
     \includegraphics[scale=0.315]{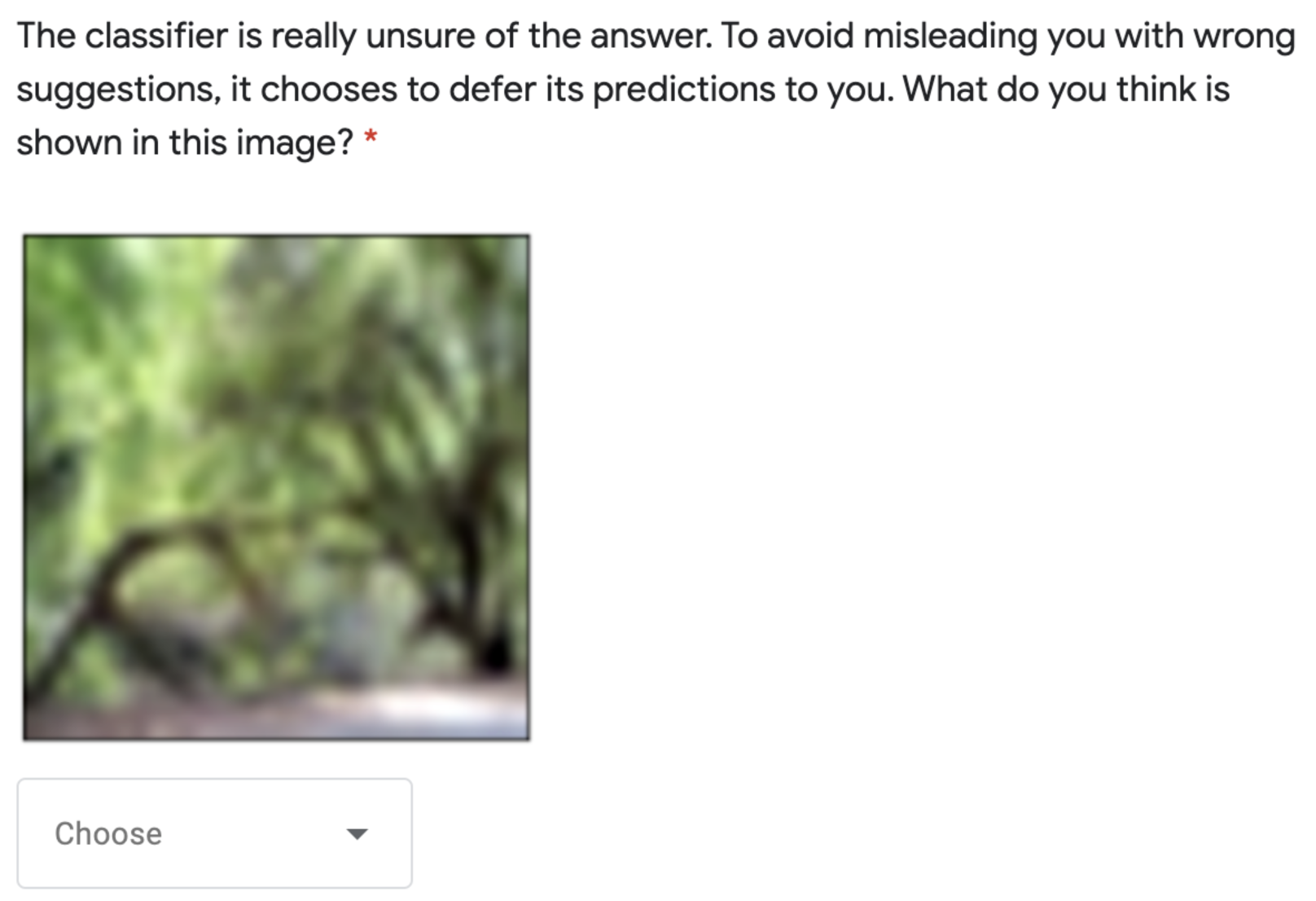}
    \caption{We also have images where the RAPS set provides incorrect and potentially misleading labels (such as 'willow tree') but where a D-RAPS set defers. The true label for this image is: \textbf{Forest}}
    \label{fig:RAPS_DRAPS_2_defer}
\end{figure*}
\begin{figure*}[htb]
    \centering
    \includegraphics[scale=0.4]{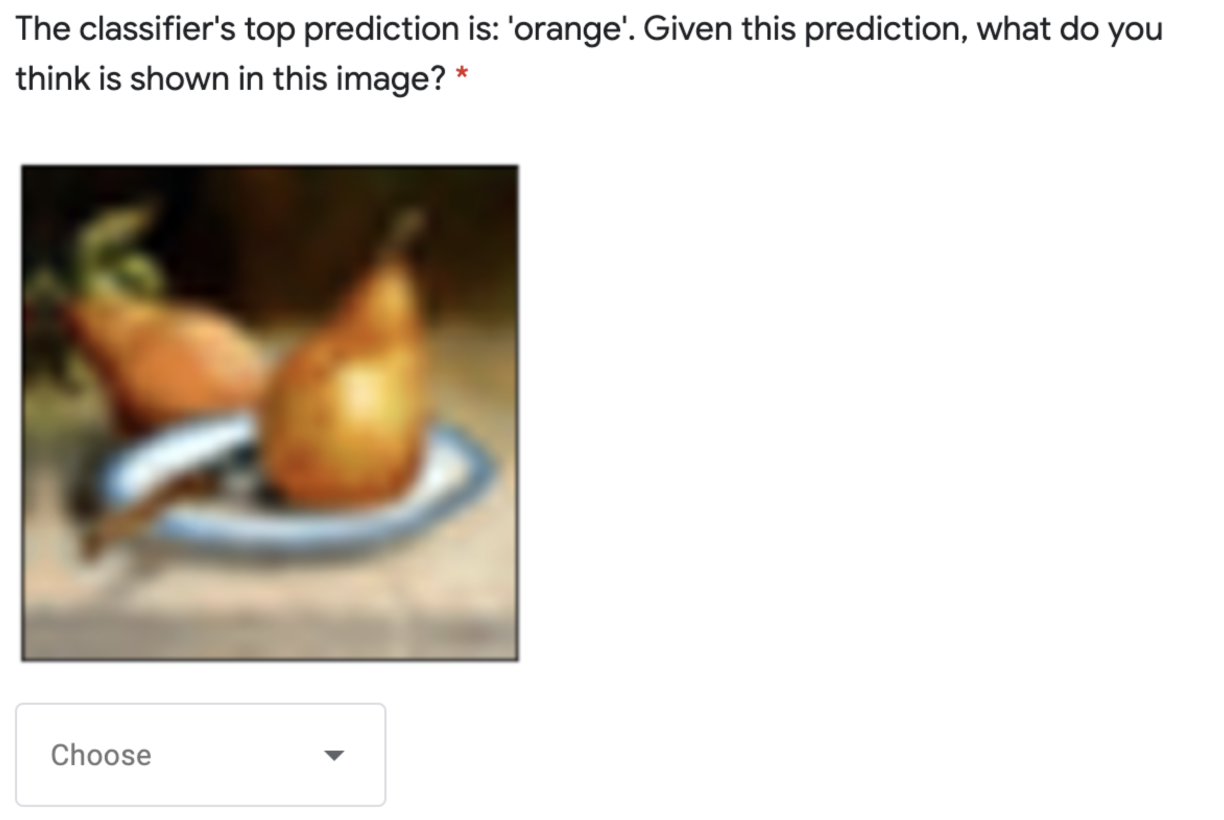}
     \includegraphics[scale=0.41]{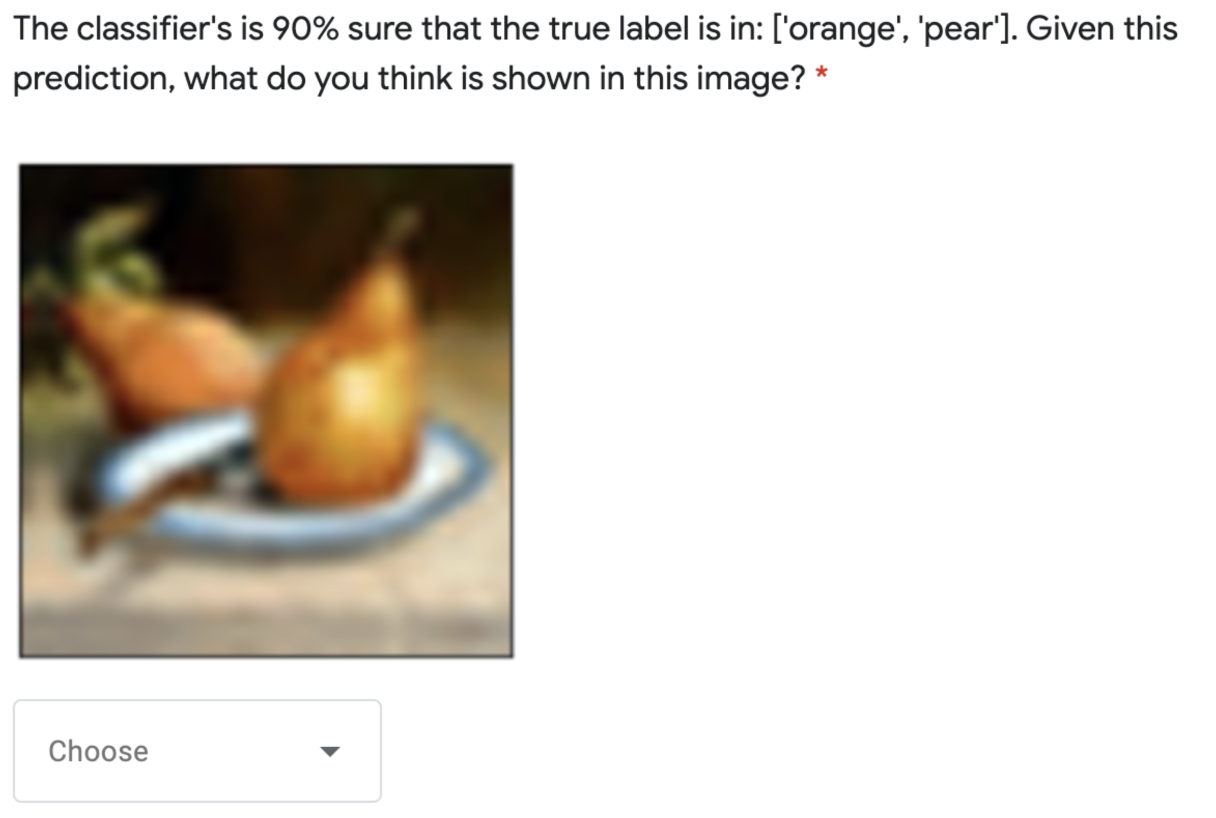}
    \caption{While evaluating Top-1 predictions (left) in comparison to RAPS (right), we choose some examples where Top-1 predictions are wrong but where the true label is contained in the RAPS set. We have $2$ such images for each difficulty quantile for a total of $6$ images. In this image, the true label is: \textbf{Pear}.}
    \label{fig:Top_1_RAPS_incorrect}
\end{figure*}
\begin{figure*}[htb]
    \centering
    \includegraphics[scale=0.41]{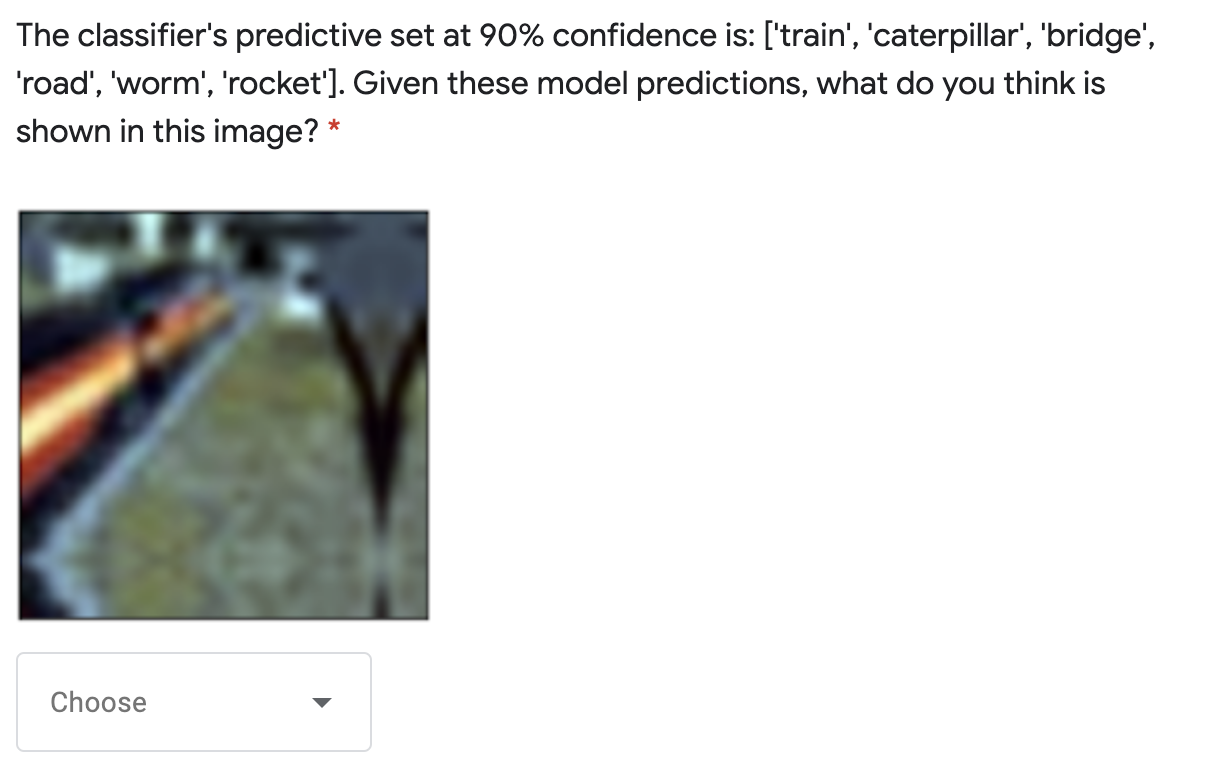}
     \includegraphics[scale=0.41]{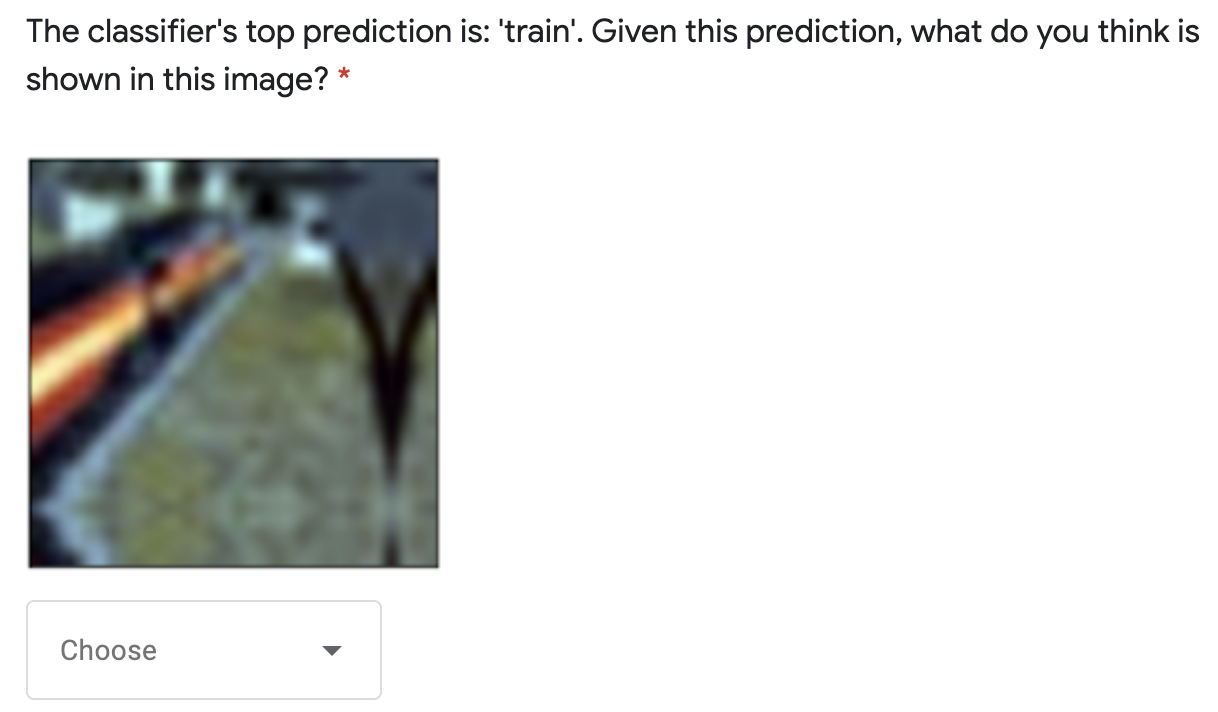}
    \caption{For the remaining $9$ images, the Top-1 predictions are correct and the RAPS sets contain the true label. Here, for example, the true label is: \textbf{Train}.}
    \label{fig:Top_1_RAPS_correct}
\end{figure*}

\end{document}